%% file: main.tex
\RequirePackage{fix-cm}
\documentclass[12pt]{article}
\PassOptionsToPackage{table,dvipsnames}{xcolor}
\input{commands}

\usepackage[T1]{fontenc}
\usepackage[utf8]{inputenc}
\usepackage{amsmath,amssymb}
\usepackage{graphicx}
\usepackage{booktabs}
\usepackage[table]{xcolor}
\usepackage{array,tabularx,multirow}
\usepackage{capt-of}
\usepackage{wrapfig}
\usepackage{placeins}
\makeatletter
\@ifpackageloaded{natbib}{}{\usepackage[authoryear,round]{natbib}}
\makeatother
\usepackage{url}
\usepackage{hyperref}
\hypersetup{hidelinks}

\usepackage[ruled,linesnumbered]{algorithm2e}
\SetAlgoNoLine
\SetNlSty{textnormal}{}{:\ }
\SetAlgoCaptionSeparator{\ }
\SetAlCapNameFnt{\bfseries}
\SetAlCapFnt{\normalfont}
\SetKwInput{KwRequire}{Require}
\SetKwInput{KwEnsure}{Ensure}
\SetKw{Return}{return}

\definecolor{groupgray}{RGB}{243,243,243}
\definecolor{pamerrowsoft}{RGB}{247,246,252}
\definecolor{pamerrow}{RGB}{242,242,252}
\definecolor{gainred}{RGB}{190,45,45}
\definecolor{gaingreen}{RGB}{35,150,85}
\definecolor{algblue}{RGB}{25,55,220}
\definecolor{algred}{RGB}{225,35,25}
\newcommand{\goodgain}[1]{\hspace{2pt}{\scriptsize\textcolor{gaingreen}{(#1)}}}
\newcommand{\badgain}[1]{\hspace{2pt}{\scriptsize\textcolor{gainred}{(#1)}}}
\newcommand{\neutralgain}[1]{\hspace{2pt}{\scriptsize\textcolor{gray}{(#1)}}}
\newcommand{\pamer}{\textsc{PaMER}}
\newcommand{\pamerplus}{\textsc{PaMER}\textsuperscript{+}}

\newcolumntype{Y}{>{\centering\arraybackslash}X}
\title{Memory Control Signals Emerge Before Action in Long Horizon Agents}
\input{authors}

\hypersetup{colorlinks=true,linkcolor=mydarkblue,citecolor=mydarkblue,urlcolor=mydarkblue}
\usepackage{team-template}
\renewcommand{\TeamPaperID}{MEMORY CONTROL / LONG-HORIZON AGENTS}
\renewcommand{\TeamShortTitle}{Memory Control}
\begin{document}
\pagestyle{fancy}
\maketitle
\thispagestyle{first}

\begin{abstract}
Long horizon language model agents continuously accumulate interaction history, increasing computational cost while making relevant information harder to preserve and reuse. Existing context management methods mainly focus on how to compress or retrieve history, but largely leave open whether the model itself already represents the need for these memory operations before they occur. We study the hidden state immediately before each agent action and find that compression and recall needs are already encoded in the model's internal representations. These signals cannot be explained by simple context length or interaction progress, and they exhibit distinct formation patterns across model depth. We further show that most memory decision information is preserved in a compact recent context, while selectively restored historical evidence complements the long range dependencies that recent context misses. Based on these findings, we propose \textbf{Preaction Memory with Evidence Retrieval(PaMER)}, which combines state guided compression with external evidence retrieval. PaMER+ further introduces step level evidence selection to recover only the historical information required by the current task. Experiments on WorkBuddyBench, across multiple context management baselines and model backbones, show that our framework substantially reduces context consumption while maintaining competitive task performance.
\end{abstract}

\section{Introduction}
\label{sec:introduction}

Long horizon language model agents face a direct challenge: interaction
history continuously grows, while longer context increases computation,
latency, and memory cost and may also reduce the model's ability to use
relevant information effectively
\citep{liu2024lost,hsieh2024ruler,laban2026llms}.
A common strategy is to compress history when the context approaches its
window limit, or to trigger compression periodically according to token count
or interaction length. However, context length is not equivalent to
compression need. Two agent trajectories with the same number of tokens may
represent very different task states. One may be long because the agent has
just assembled critical evidence, while another may already contain repeated
searches, irrelevant tool outputs, and outdated intermediate results.
Recent work therefore begins to let agents manage context according to the
current task state rather than relying only on fixed length rules
\citep{li2026self,li2026sculptor,liu2026context,verma2026active,wang2026hymem}.

Compression introduces a complementary problem: information removed from the
active context may become important again later. When subsequent reasoning
depends on earlier evidence, the agent must recover or retrieve historical
information
\citep{hu2026sam,zhang2026comem,li2026acm,li2026sculptor,yu2026agentic,sun2026h,maharana2024evaluating}.
Long horizon memory management therefore involves two dynamic
decisions: whether the current history should be compressed, and whether
earlier information is needed again. Existing methods implement such
operations through rules, prompting, additional controllers, or dedicated
training
\citep{kang2025acon,li2026self,li2026sculptor,hu2026sam,yan2026memory,shen2026membuilder,xiong2026memory,huang2026ama}.
This leaves a more fundamental issue unresolved:
whether the language model already represents its memory need
internally before compression or historical recall is performed.

We study the hidden state immediately before the model produces its next
action at each agent decision point. Our hypothesis is that this preaction
hidden state has already integrated the task objective, recent evidence, and
historical dependencies, and can therefore reveal whether the current
history should be compressed or whether earlier evidence is needed again.
The final preaction hidden state predicts compression and recall with AUROC
values of \textbf{0.831} and \textbf{0.765}, respectively, substantially
outperforming observable controls based on context length, turn count, and
tool type. More importantly, we further investigate \textbf{how this
information forms inside the model}. Layerwise analysis shows that
compression information generally becomes stronger with model depth, whereas
recall information emerges earlier and reaches its highest measured
performance at an intermediate layer. The representation at the last input
token also consistently outperforms mean pooling. These results suggest that
the model contains more than a simple signal that the context has become
long. Instead, memory needs are reflected in internal representations that
depend on the current task state and develop differently during model
computation.

We next study how much history is required to preserve this decision
information. Retaining only the task prefix and the two most recent complete
interaction blocks uses approximately 26\% of the Full Context
input, yet preserves most of the predictive performance for both compression
and recall. This indicates that the recent working state already contains
most of the dynamic information required for memory decisions. Recent context
alone, however, is not sufficient. When long range dependencies become
relevant again, restoring a small amount of selected raw historical evidence
recovers additional predictive information and outperforms the hidden state
memory alternatives that we evaluate. This leads to a natural design
principle: recent context describes the current working state, while
external history restores the evidence that is currently missing.

Based on these findings, we propose \textbf{PaMER}, Preaction Memory with
Evidence Retrieval. PaMER uses the preaction representation extracted by a shared frozen
Qwen3.5-9B model to decide when to compress, removes older history from the active context, and
stores it in retrievable external memory for later task-specific evidence reuse. PaMER+ further introduces step
level evidence selection so that only the raw historical steps required by
the current task are restored.

Our contributions are threefold.
\textbf{First}, we systematically study whether compression and recall needs
are already present in the model's internal representations before the
corresponding memory operations occur. We show that preaction hidden states
contain memory decision information that cannot be explained by simple
context length or interaction progress, and we further characterize how these
signals form across model depth.
\textbf{Second}, we investigate which parts of history support these
decisions. Approximately 26\% of Full Context is sufficient to
preserve most predictive performance, while selective historical evidence
recovers long range dependencies that are missing from recent context.
\textbf{Third}, we translate these findings into PaMER and PaMER+, and
evaluate them on WorkBuddyBench~\citep{team2026tencent} against multiple context management
baselines and across several model backbones. The resulting system
substantially reduces context consumption while maintaining competitive task
performance.

\section{Related Work}
\label{sec:related-work}

\subsection{Context Compression for Long Horizon Agents}

Long context processing has motivated a broad class of methods that reduce
the amount of history presented to a language model while preserving
task relevant information. LLMLingua 2 learns token importance through
distilled supervision, while LongLLMLingua extends prompt compression to
long context settings
\citep{pan2024llmlingua,jiang2024longllmlingua}.
As language models are increasingly used as agents, recent work has moved
from static prompt compression to the management of continuously growing
interaction histories. ACON~\citep{kang2025acon} optimizes compression for long horizon agent
trajectories, SWE Pruner~\citep{wang2026swe} adaptively removes redundant
context in coding agents, and PACE~\citep{wei2026pace} selects historical
information according to its predicted relevance to future actions.

Recent studies further suggest that compression timing should depend on the
current task state rather than only on a fixed context threshold.
Self Compact allows an agent to decide when to compact its trajectory,
Self GC continuously reorganizes context during interaction, and Sculptor
provides explicit operations for summarizing, hiding, restoring, and
searching information
\citep{li2026self,hao2026self,li2026sculptor}.
State Proprioception similarly exposes runtime information to support
adaptive context management
\citep{xu2026llm}.
These methods show that context length alone does not fully characterize
when memory intervention is useful. Our work studies an earlier question in
this process. Rather than first providing an explicit context state or
training a new memory policy, we ask whether the model's own hidden
representation already contains information about the need to compress.

\subsection{Memory Retrieval and Historical Evidence}

Removing information from active context creates a complementary problem:
historical evidence may become useful again later \citep{ai2026cognitive,zhu2026hela,banerjee2026apex,cao2026higmem,ge2025tremu}. MemGPT organizes
information between active and external memory, SAM adapts memory access to
the current reasoning state, CoMem introduces a separate long context
component for context management, and ACM studies agentic context management
for long horizon tasks
\citep{packer2023memgpt,hu2026sam,zhang2026comem,li2026acm}.
Learning What Not to Forget further studies lightweight mechanisms for
deciding which parts of long agent histories should remain available
\citep{jahan2026learning,zhang2026lightweight,sun2026preference}.

Retrieval itself has also been studied as a dynamic decision. FLARE performs
retrieval according to signals observed during generation, while Self RAG
learns when to retrieve and how to use retrieved evidence through explicit
reflection
\citep{jiang2023active,asai2024self,li2026ocr,hu2026querylink}.
Most closely related to our analysis, Probing RAG uses internal
representations to determine whether additional document retrieval is
necessary
\citep{baek2025probing}.
Our setting differs in both the target and the interaction structure.
Probing RAG studies selective retrieval of external documents for question
answering, whereas we study two memory needs inside long agent trajectories:
whether accumulated interaction should be compressed and whether evidence
that previously left the active context is needed again. We further examine
how these two signals develop across model depth and how much of the
trajectory history is required to preserve them.

\subsection{Probing Internal Representations}

Our analysis is also related to work that uses probes to study information
encoded in neural representations. Linear probes provide a simple way to
measure whether a target property can be decoded from intermediate model
states
\citep{alain2016understanding}.
At the same time, prior work emphasizes that probe accuracy should be
interpreted carefully and does not by itself identify the causal mechanism
used by the underlying model
\citep{hewitt2019designing,belinkov2022probing}.

Related studies have shown that hidden representations can expose information
that is not directly available from model outputs. Such signals include
latent knowledge~\citep{burns2022discovering},
confidence about whether an answer is known~\citep{kadavath2022language,mahaut2024factual},
and information related to truthfulness~\citep{azaria2023internal}.
We use probing for a more specific systems question: whether the model state
immediately before an agent action already contains information about an
upcoming memory need. Accordingly, our claims are predictive rather than
causal. We compare hidden states with observable metadata and surface text
controls, analyze how compression and recall signals evolve across layers,
and study how these signals change when most historical context is removed.
These findings then provide the empirical basis for the design of our online
memory controller for long horizon agent interactions.

\section{Understanding Memory Needs in Long Horizon Agents}
\label{sec:setup}
\label{sec:memory-probing}
\label{sec:memory_decisions}

We study whether compression and recall needs are represented internally
before the model produces its next action. Our analysis defines these memory
decisions from long agent trajectories, extracts the corresponding preaction
hidden states, and measures how well the decisions can be read from different
model layers across successive stages of computation.

We first compare hidden states with observable controls, then examine
how the signal changes across layers. Finally, we vary the available
history to test how much decision information remains under a smaller
context budget and after historical evidence is restored.

\subsection{Memory Decision Data and Probing}
\label{sec:memory-states}
\label{sec:decision-probes}
\label{sec:layerwise-probing}

We collect 72,912 decision points from 2,521 long agent trajectories
covering 600 questions. The source questions do not overlap with
WorkBuddyBench and come from different task domains. An LLM annotator
identifies states where compression or recall is needed. The annotation
procedure and the use of multiple labeled points within one trajectory
are described in Appendix~\ref{app:annotation}.

At decision step $t$, let $o_t$ denote the context available before the
next action. For trajectory $\tau$, the annotator identifies compression
and recall decision sets $\mathcal A^{\mathrm c}(\tau)$ and
$\mathcal A^{\mathrm r}(\tau)$. We define
\begin{equation}
y_t^m
=
\mathbf{1}\!\left[t\in\mathcal A^m(\tau)\right],
\qquad
m\in\{\mathrm c,\mathrm r\},
\label{eq:memory-labels}
\end{equation}
where $\mathrm c$ denotes compression and $\mathrm r$ denotes recall.
Compression indicates that older context should be condensed while
preserving task relevant information. Recall indicates that the current
reasoning state needs earlier evidence that is no longer available in
the active context. Recall is evaluated only when valid external memory
exists. A trajectory may contain multiple positive decision points because
the same information need can arise again during a long interaction. This formulation allows memory needs to be evaluated at the decision level
while preserving their dependence on the surrounding trajectory context.

For each decision point, we extract the model state immediately before
the next action. Let $\phi^{(L)}(o_t)$ denote the final hidden sequence
after the output RMSNorm. Our main representation is
\begin{equation}
\mathbf h_t
=
\phi^{(L)}(o_t)[-1]
\in\mathbb R^d,
\qquad d=4096.
\label{eq:preaction-state}
\end{equation}
The input uses the same conversation format and tool descriptions as
the corresponding agent state. Neither the annotation nor the future
action is included when extracting $\mathbf h_t$.

We train an independent readout for each memory decision. The linear
probe predicts
\begin{equation}
p_t^m
=
\sigma\!\left(
(\mathbf w^m)^\top\mathbf h_t+b^m
\right),
\qquad
m\in\{\mathrm c,\mathrm r\}.
\label{eq:linear-probe}
\end{equation}
We additionally evaluate a one hidden layer MLP to test whether nonlinear
readout provides additional predictive power. The language model itself
is not updated during probe training. This separation is intended to test what information is already accessible
from the pretrained representation without adapting the underlying model.

Because both targets are imbalanced, we optimize class weighted binary
cross entropy,
\begin{equation}
\mathcal L_m
=
-\frac{1}{N_m}
\sum_{i=1}^{N_m}
\left[
\rho_m y_i^m\log p_i^m
+
(1-y_i^m)\log(1-p_i^m)
\right],
\label{eq:weighted-bce}
\end{equation}
where $\rho_m=N_m^-/N_m^+$ is computed from the training split only.
All states from the same question remain in the same split. We use
70/15/15 training, validation, and test splits over five seeds.
Decision thresholds are selected by validation F1 and then fixed for
testing. AUROC is the primary metric. AUPRC, thresholded metrics, and
the complete MLP definition are reported in
Appendix~\ref{app:probe-training}.

This evaluation protocol treats each question as the grouping unit rather
than individual decision points, reducing leakage between closely related
states from the same underlying task. It also ensures that probe performance
reflects generalization across questions rather than memorization of trajectory-specific
patterns. This setting more closely reflects deployment, where the controller must generalize to unseen tasks.
It also provides a stricter test of whether the learned signal transfers beyond the trajectories used for probe fitting.

To characterize how memory decision information develops across model
depth, we train an independent linear probe at each evaluated layer.
At layer $\ell$, we compare the representation at the final input
position with the mean representation over all input tokens:
\begin{equation}
\mathbf h_{t,\mathrm{last}}^{(\ell)}
=
\phi^{(\ell)}(o_t)[-1],
\qquad
\mathbf h_{t,\mathrm{mean}}^{(\ell)}
=
\frac{1}{n_t}
\sum_{j=1}^{n_t}
\phi^{(\ell)}(o_t)[j].
\label{eq:layerwise-states}
\end{equation}

For each layer, target, and pooling choice, we also measure the difference
between the mean positive and negative probe logits,
\begin{equation}
\Delta_{\ell,\alpha}^{m}
=
\mathbb E
\!\left[
z_{t,\ell,\alpha}^{m}
\mid y_t^m=1
\right]
-
\mathbb E
\!\left[
z_{t,\ell,\alpha}^{m}
\mid y_t^m=0
\right],
\label{eq:layerwise-gap}
\end{equation}
where $\alpha\in\{\mathrm{last},\mathrm{mean}\}$.
AUROC remains the primary metric because the magnitude of the logit gap
also depends on probe scale.

\subsection{Memory Signals beyond Observable Context}
\label{sec:hidden_baselines}
\label{sec:latent}

\begingroup
\setlength{\intextsep}{0pt}
\setlength{\columnsep}{10pt}
\setlength{\emergencystretch}{1em}

\begin{wraptable}{r}{0.31\textwidth}
\centering
\normalsize
\setlength{\abovecaptionskip}{0pt}
\setlength{\belowcaptionskip}{4pt}
\caption{Memory decision signals.
Mean AUROC.}
\label{tab:hidden_controls}

\setlength{\tabcolsep}{3.5pt}
\renewcommand{\arraystretch}{1.06}
\resizebox{\linewidth}{!}{%
\begin{tabular}{l|cc}
\toprule
Signal & Comp. $\uparrow$ & Recall $\uparrow$ \\
\midrule
Length / Turn & 0.751 & 0.552 \\
Metadata & 0.758 & 0.549 \\
\midrule
\rowcolor{pamerrowsoft}
Hidden (Linear) & \textbf{0.831} & \textbf{0.765} \\
\rowcolor{pamerrowsoft}
Hidden (MLP) & 0.827 & 0.763 \\
\bottomrule
\end{tabular}%
}
\end{wraptable}

A simple explanation for memory decisions is that the agent compresses
because the context is long, or recalls because the interaction has
lasted many turns. Table~\ref{tab:hidden_controls} shows that these
observable signals are informative but insufficient. Metadata reaches
$0.758$ AUROC for compression and $0.549$ for recall, whereas the final
hidden state reaches $0.831$ and $0.765$, respectively. A nonlinear MLP
provides almost no additional gain, suggesting that much of this
information is already accessible through a linear readout.

These results show that preaction hidden states contain predictive
information beyond the tested metadata. They do not imply that the
probes identify the model's causal mechanism. Additional text controls,
uncertainty estimates, and robustness results are reported in
Appendix~\ref{app:probing-details}.

\par
\endgroup

\subsection{How Memory Signals Form across Depth}
\label{sec:layerwise_results}

Figure~\ref{fig:layerwise} shows that compression prediction becomes
stronger with model depth. The earliest measured representation provides
only a weak signal, with AUROC around $0.57$, whereas the final normalized
state reaches $0.831$. Recall follows a different pattern and reaches its
highest measured mean AUROC of $0.780$ at block 16, compared with $0.765$
at the final state. The two memory needs therefore develop differently
across the network.

\begin{figure*}[t]
\centering
\begin{minipage}[t]{0.49\textwidth}
  \centering
  \includegraphics[width=\linewidth]{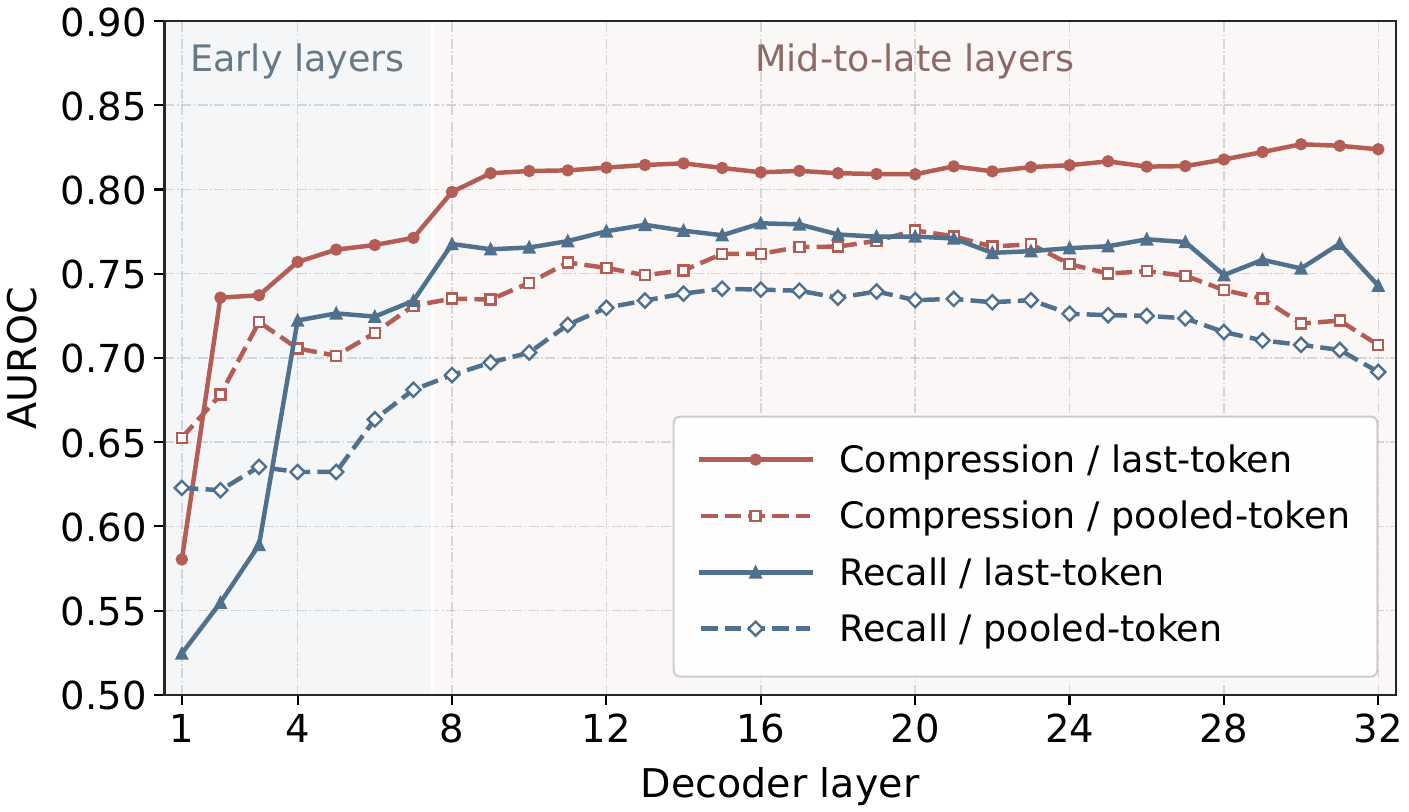}
\end{minipage}\hfill
\begin{minipage}[t]{0.49\textwidth}
  \centering
  \includegraphics[width=\linewidth]{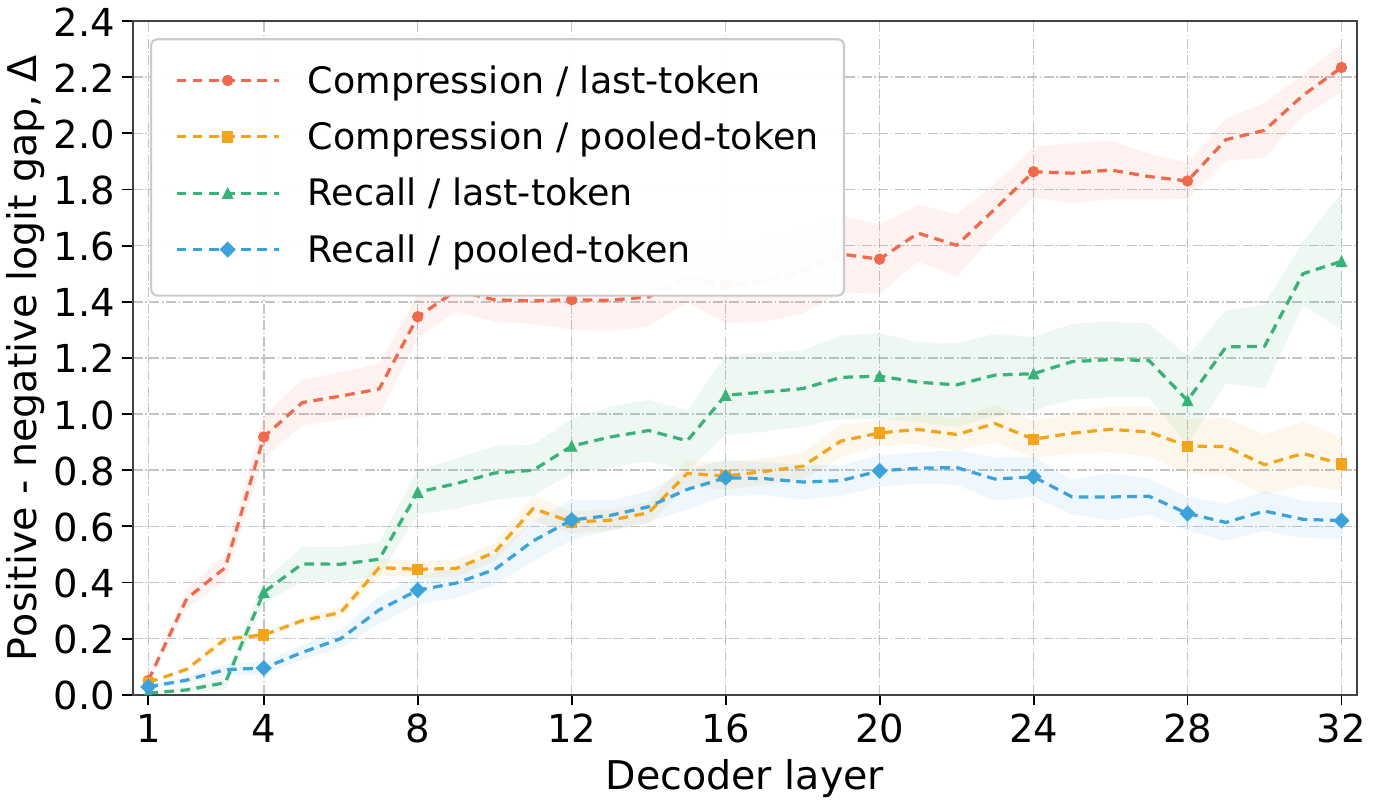}
\end{minipage}
\caption{\textbf{Memory-decision signals across layers.}
(a) AUROC of independently trained probes using the final input token
or mean-pooled states. (b) Mean positive-minus-negative probe logit.
Results are aggregated over five question-grouped splits.}
\label{fig:layerwise}
\end{figure*}

The final input position is consistently more informative than mean
pooling. At the final state, last-token and mean-pooled AUROCs are
$0.831$ and $0.712$ for compression, and $0.765$ and $0.696$ for recall.
This suggests that information relevant to the next memory action is
more accessible near the position used to produce that action than in
an average over the full input.

Although recall peaks at block 16, we use the final state for the online
controller. Compression, which is the learned online decision in
\pamer{}, improves from $0.810$ at block 16 to $0.831$ at the final
state. The recall advantage at block 16 is $0.015$, while the reported
variation across splits is larger than this difference. More importantly,
the final preaction state provides a consistent interface across models
with different depths, whereas a fixed intermediate block does not have
the same meaning across architectures. We therefore use the final state
as the controller readout for stable cross-model deployment behavior without claiming that it is the best layer for
every memory-related target. Exact layer-level results are reported in
Appendix Table~\ref{tab:layerwise-metrics}.

\subsection{What Historical Information Is Needed}
\label{sec:compact_context}
\label{sec:evidence}

We next reduce the amount of trajectory history available to the model
while keeping the prediction targets unchanged. Let $P_t$ denote the
system and task prefix and let $\mathcal R_t$ contain the two most recent
complete interaction blocks. The Recent2 condition uses
\begin{equation}
C_t^{\mathrm{recent}}
=
P_t
\oplus
\mathcal R_t.
\label{eq:recent-context}
\end{equation}

\begin{figure*}[t]
\centering
\includegraphics[width=0.96\textwidth]{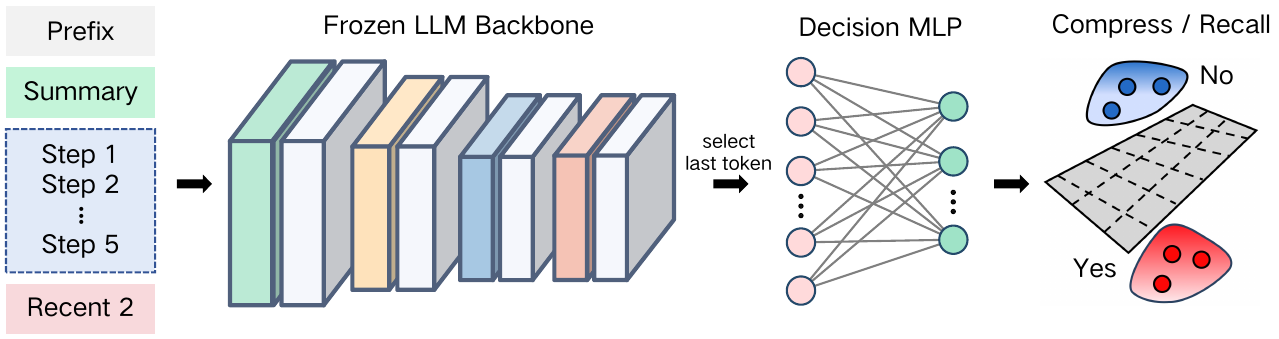}
\caption{\textbf{Reading memory needs before an action.}
The diagram illustrates compression and recall as prediction targets.
In the deployed controller, the compression heads read the final state
from the task prefix and Recent2. The recall probe is used for analysis,
not as an online retrieval gate.}
\label{fig:pamer_overview}
\vspace{-15pt}
\end{figure*}

This construction preserves the task objective together with the current
working state while removing most earlier interaction. Recent2 uses only
$26.0\%$ of the Full Context token budget, yet retains compression and
recall AUROCs of $0.776$ and $0.761$, compared with $0.829$ and $0.780$
under Full Context. Decision distillation keeps the same context budget.
It changes compression AUROC only from $0.776$ to $0.781$, but increases
compression F1 from $0.541$ to $0.598$, suggesting that Recent2 already
contains much of the required information while the Full Context teacher
mainly improves the thresholded decision.

The remaining gap shows that recency alone is not sufficient. Some
dependencies needed for a memory decision lie outside the recent window.
If $\mathcal E_t$ denotes recovered historical evidence, the reconstructed
context is
\begingroup
\setlength{\abovedisplayskip}{3pt}
\setlength{\abovedisplayshortskip}{3pt}
\setlength{\belowdisplayskip}{3pt}
\setlength{\belowdisplayshortskip}{3pt}
\begin{equation}
C_t^{\mathrm{recover}}
=
P_t
\oplus
\mathcal E_t
\oplus
\mathcal R_t.
\label{eq:recovered-context}
\end{equation}
\endgroup

\begingroup
\setlength{\intextsep}{0pt}
\setlength{\columnsep}{10pt}
\setlength{\emergencystretch}{1em}

\begin{wraptable}{r}{0.45\textwidth}
\centering
\setlength{\abovecaptionskip}{0pt}
\setlength{\belowcaptionskip}{3pt}

\scriptsize
\caption{Compact context and historical evidence.
Tokens are relative to Full Context.}
\label{tab:compact_main}

\setlength{\tabcolsep}{3.6pt}
\renewcommand{\arraystretch}{1.05}
\resizebox{\linewidth}{!}{%
\begin{tabular}{l|ccc}
\toprule
History
& Tokens $\downarrow$
& Comp. $\uparrow$
& Recall $\uparrow$ \\
\midrule

Full Context
& 100.0
& 0.829
& \textbf{0.780} \\

\midrule
\multicolumn{4}{l}{\textit{Compact context}} \\[-1pt]

Recent2
& 26.0
& 0.776
& 0.761 \\

\rowcolor{pamerrowsoft}
\quad + Decision KD
& 26.0
& 0.781
& 0.753 \\

\midrule
\multicolumn{4}{l}{\textit{Recovered history}} \\[-1pt]

Hidden Bank
& 16.8
& 0.769
& 0.730 \\

Selective Raw
& 66.2
& \textbf{0.832}
& 0.769 \\

Qwen Retrieval
& 53.9
& 0.804
& 0.764 \\

\bottomrule
\end{tabular}%
}
\vspace{-2pt}
\end{wraptable}

The task and system prefix is retained in every condition, so all
comparisons preserve the goal against which relevance is judged.
The prefix-only Current condition is reported in
Appendix~\ref{app:compact-results}. These comparisons do not isolate
the causal contribution of the task prefix.

Table~\ref{tab:compact_main} shows that a short recent history preserves
most of the recall signal and a large fraction of the compression signal.
However, the form of older information also matters. Hidden Bank Recent5
uses the smallest token budget in the table but loses more decision
information. Restoring selected raw evidence is more effective in this
setting: Selective Raw Top5 reaches $0.832$ compression AUROC and $0.769$
recall AUROC. Qwen Top3 provides a practical embedding-based alternative,
reaching $0.804$ and $0.764$ with $53.9\%$ of the Full Context token
budget, with a moderate token budget.

Taken together, the results suggest three complementary sources of
information. The task prefix establishes what is relevant, recent
interaction describes the current working state, and selected historical
evidence restores dependencies that have left the recent window. We do
not interpret this as a causal decomposition. Instead, it provides the
design pattern used by \pamer{} in the next section. Full context sweeps
and retrieval controls are reported in
Appendix~\ref{app:compact-results}.

\par
\endgroup

\section{\pamer: Preaction Memory with Evidence Retrieval}
\label{sec:method}

\pamer{} uses the current model state to decide when to compress history
and an external memory to retain information for later access.
\pamerplus{} adds selection at the level of individual historical steps.
The design follows the preceding analysis: a short recent context can
support compression decisions, while selected older evidence remains
available when information has left the active history.

\subsection{State Guided Compression and Retrieval}
\label{sec:pamer_overview}
\label{sec:pamer_controller}

\pamer{} retains the system and task prefix $P_t$ and the two most recent
complete interaction blocks, $\mathcal R_t$. A block contains an
assistant message and its subsequent tool results and other messages,
up to the next assistant message. Earlier complete blocks form the
compression candidate. Operating on complete blocks preserves the
assistant and tool-call structure.


For a candidate of at least 1,024 tokens, five independently trained
compression heads read the final state from $P_t\oplus\mathcal R_t$.
Compression is triggered when at least three heads exceed their
validation-selected thresholds. The heads are trained on Recent2 using
both the compression annotation and the corresponding Full Context
teacher prediction:
\begin{equation}
\mathcal L_s
=
\operatorname{WBCE}(z_{t,s},y_t^{\mathrm c})
+
0.5\operatorname{BCEWithLogits}
(z_{t,s},p_{t,s}^{\mathrm{full}}).
\label{eq:pamer_kd}
\end{equation}
Here $s$ indexes the independently trained head. The Full Context teacher
is used during training, not during online decisions.

When compression is triggered, we summarize the older blocks.
Each summary is encoded with Qwen3-Embedding-8B and stored with links
to the covered history. The raw blocks are removed only after storage
succeeds. The current request then contains
\begin{equation}
C'_t
=
P_t
\oplus
\mathcal R_t.
\label{eq:pamer_compressed_context}
\end{equation}
The new summary is not inserted into this same request, and no recall
is performed on it. This prevents immediate retrieval from undoing the
context reduction.

On a later request with available memory, the task and Recent2 form an
embedding query. The controller retrieves the Top3 summaries by cosine
similarity and inserts them between the prefix and recent blocks.
The current implementation does not use the learned recall probe as a
gate. Thus, recall probing and online retrieval remain distinct parts of
the study. If a memory operation fails, the controller keeps the original
history rather than applying an incomplete update. Implementation
details are in Appendix~\ref{app:controller-contract}.
\begin{figure*}[t]
\centering
\begin{minipage}[t]{0.55\textwidth}
  \vspace{0pt}
  \centering
  \includegraphics[width=\linewidth,keepaspectratio]{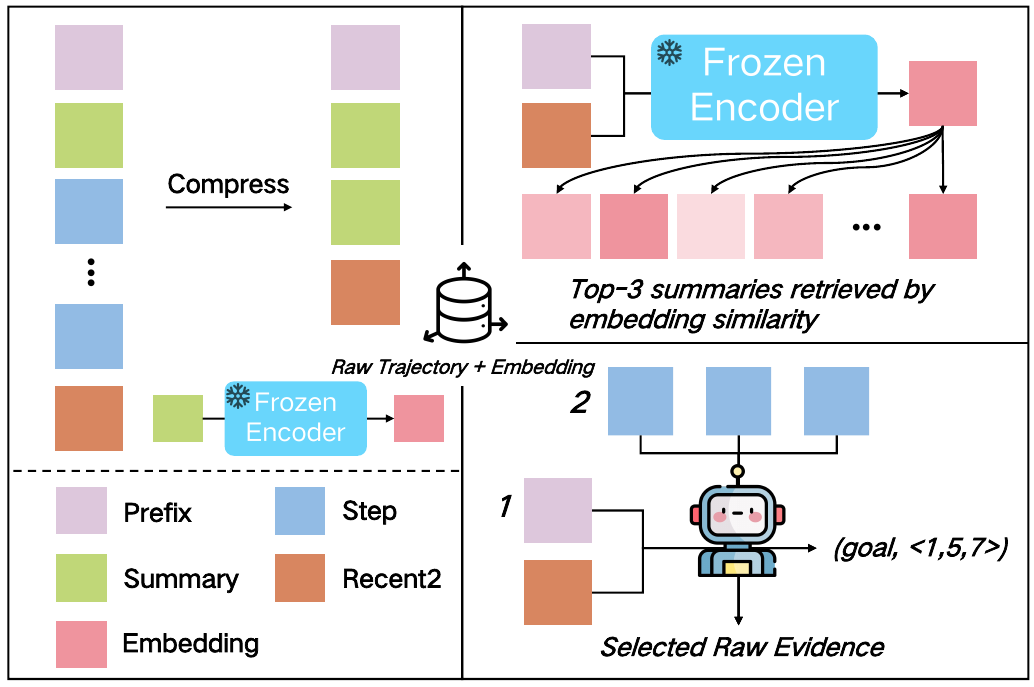}
\end{minipage}\hfill
\begin{minipage}[t]{0.43\textwidth}
  \vspace{0pt}
  \centering
  \includegraphics[width=\linewidth,keepaspectratio]{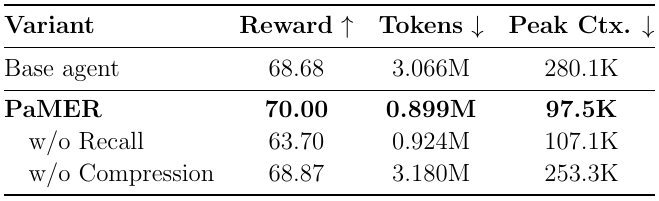}
  \vspace{0.45em}
  \includegraphics[width=\linewidth,keepaspectratio]{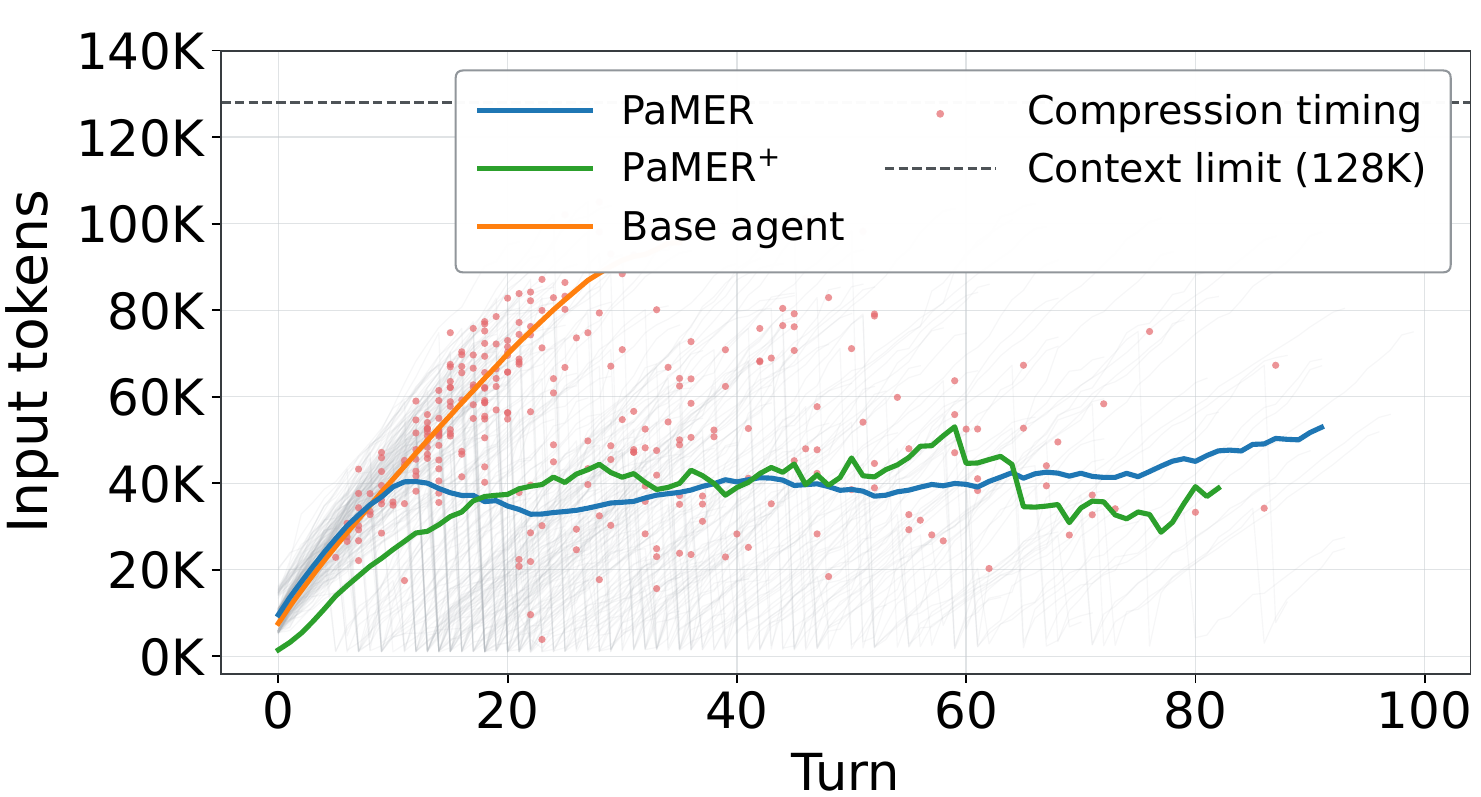}
\end{minipage}
\caption{\textbf{Online behavior and component comparisons.}
(a) Compression and evidence recall. (b) Component results.
(c) Input context across interaction turns. The horizontal line marks
128K as an analysis reference, not a hard limit enforced by the controller.}
\label{fig:system_analysis}
\vspace{-10pt}
\end{figure*}

\subsection{\pamerplus: Step Level Evidence Selection}
\label{sec:pamer_provenance}

A summary can cover many steps even when the current request needs only
a small part of them. \pamerplus{} records \texttt{step\_details} for
each summary, linking its contents to the original historical steps.
A selected summary identifies candidate memory, but does not authorize
restoring every step that it covers.
The selector specifies \texttt{raw\_steps\_needed}, and only those
complete historical blocks enter the recall path. The summary therefore
helps locate the relevant memory, while selected raw steps provide the
supporting evidence for the current reasoning state. The selected steps must belong to the chosen memory,
and invalid selections are rejected. Detailed validation, cache reuse,
and fallback rules are given in
Appendix~\ref{app:pamerplus-algorithm}.

\section{Experiments}
\label{sec:experiments}

We evaluate task performance and token consumption on the full
WorkBuddyBench benchmark, compare context-management methods on
Pilot40, and examine results across several model backbones.
We then inspect context growth and the separate roles of compression
and recall.

\subsection{Experimental Setup}
\label{sec:system_setup}

WorkBuddyBench contains 260 tasks: 80 Code, 50 Office, 60 Security,
and 70 Web. Pilot40 contains 10 fixed tasks from each domain. Methods
within a comparison use the same task set. The memory-decision corpus
is collected separately from these evaluation tasks.

We report mean task reward multiplied by 100. Overall scores average
over tasks, rather than giving equal weight to domains of different
sizes. Tokens denote mean total input and output tokens per task,
including the main agent and auxiliary memory calls. Cache tokens are
not added separately because they are already included in input usage.
Detailed evaluation protocols, token accounting, and system results are provided in
Appendix~\ref{app:system-details}.

\begin{table*}[h]
\vspace{-12pt}
\centering
\caption{
\textbf{Full benchmark system performance and token usage.}
Scores are multiplied by $100$.
Green and red indicate favorable and unfavorable changes from the base system.
}
\label{tab:domain_results}

\small
\setlength{\tabcolsep}{7pt}
\renewcommand{\arraystretch}{1.10}

\resizebox{\textwidth}{!}{%
\begin{tabular}{l|ccccc|c}
\toprule

Model & Code & Office & Sec. & Web & Avg. & Tokens $\downarrow$ \\

\midrule

DeepSeek-V4-Flash & \textbf{77.0} & \textbf{81.8} & 47.8 & 72.1 & 69.9 & 2.69M \\

\rowcolor{pamerrowsoft}
\quad + \pamer{} & 75.3 \badgain{-1.7} & 79.8 \badgain{-2.0} & \textbf{52.3} \goodgain{+4.5} & 70.0 \badgain{-2.1} & 69.4 \badgain{-0.4} & \textbf{0.72M} \goodgain{-73.2\%} \\

\rowcolor{pamerrowsoft}
\quad + \pamerplus{} & 76.7 \badgain{-0.3} & 80.9 \badgain{-0.9} & 51.6 \goodgain{+3.8} & \textbf{76.3} \goodgain{+4.2} & \textbf{71.6} \goodgain{+1.7} & 0.76M \goodgain{-71.7\%} \\

\bottomrule
\end{tabular}%
}
\vspace{-16pt}
\end{table*}

\begin{table*}[h]
\centering
\caption{
\textbf{Comparison with context management methods on WorkBuddyBench Pilot40.}
All methods use the same fixed 40 tasks. Scores are multiplied by $100$.
Parenthesized values show changes from the DeepSeek-V4-Flash base agent.
}
\label{tab:method_comparison}

\renewcommand{\arraystretch}{1.06}

\resizebox{\textwidth}{!}{%
\begin{tabular}{l|ccccc|c}
\toprule

Method & Code & Office & Sec. & Web & Avg. & Tokens $\downarrow$ \\

\midrule
\rowcolor{groupgray}
\multicolumn{7}{c}{\textbf{Base and Ours}} \\
\midrule

DeepSeek-V4-Flash~\citep{xu2026deepseek} & 72.9 & 81.6 & 44.6 & 71.0 & 67.5 & 1.89M \\

\rowcolor{pamerrowsoft}
\quad + \pamer{} & 76.3 \goodgain{+3.4} & 79.9 \badgain{-1.7} & 43.9 \badgain{-0.7} & 72.0 \goodgain{+1.0} & 68.0 \goodgain{+0.5} & 0.90M \goodgain{-52.4\%} \\

\rowcolor{pamerrowsoft}
\quad + \pamerplus{} & 71.3 \badgain{-1.6} & \textbf{86.6} \goodgain{+5.0} & 47.5 \goodgain{+2.9} & 70.0 \badgain{-1.0} & 68.9 \goodgain{+1.3} & 0.98M \goodgain{-48.1\%} \\

\midrule
\rowcolor{groupgray}
\multicolumn{7}{c}{\textbf{Simple Context Baselines}} \\
\midrule

Sliding Window ($K=5$) & 66.8 \badgain{-6.1} & 67.4 \badgain{-14.2} & 32.3 \badgain{-12.3} & 59.0 \badgain{-12.0} & 56.4 \badgain{-11.2} & 2.95M \badgain{+56.3\%} \\

Sliding Window ($K=10$) & 74.4 \goodgain{+1.4} & 73.1 \badgain{-8.6} & 40.1 \badgain{-4.5} & 61.0 \badgain{-10.0} & 62.1 \badgain{-5.4} & 1.98M \badgain{+5.1\%} \\

Sliding Window ($K=20$) & 79.3 \goodgain{+6.4} & 85.1 \goodgain{+3.5} & 40.3 \badgain{-4.3} & 70.0 \badgain{-1.0} & 68.7 \goodgain{+1.1} & 2.02M \badgain{+7.0\%} \\

Periodic Summary ($n=3$) & 65.4 \badgain{-7.5} & 67.7 \badgain{-14.0} & 26.3 \badgain{-18.3} & \textbf{82.7} \goodgain{+11.7} & 60.5 \badgain{-7.0} & 2.21M \badgain{+17.2\%} \\

Periodic Summary ($n=5$) & \textbf{83.6} \goodgain{+10.7} & 75.7 \badgain{-5.9} & 40.0 \badgain{-4.5} & 81.3 \goodgain{+10.3} & \textbf{70.2} \goodgain{+2.6} & 2.37M \badgain{+25.5\%} \\

\midrule
\rowcolor{groupgray}
\multicolumn{7}{c}{\textbf{API-based Methods}} \\
\midrule

PACE~\citep{wei2026pace}
& 70.4 \badgain{-2.6}
& 55.8 \badgain{-25.9}
& 43.5 \badgain{-1.0}
& 64.0 \badgain{-7.0}
& 58.4 \badgain{-9.1}
& 1.10M \goodgain{-41.6\%}
\\

LLMLingua-2~\citep{pan2024llmlingua}
& 64.6 \badgain{-8.3}
& 73.3 \badgain{-8.3}
& 40.1 \badgain{-4.5}
& 70.0 \badgain{-1.0}
& 62.0 \badgain{-5.5}
& 2.40M \badgain{+27.3\%}
\\

SelfCompact~\citep{li2026self}
& 76.8 \goodgain{+3.9}
& 79.6 \badgain{-2.0}
& 36.2 \badgain{-8.4}
& 71.0 \neutralgain{0.0}
& 65.9 \badgain{-1.6}
& 1.57M \goodgain{-17.0\%}
\\

ACON-Core~\citep{kang2025acon}
& 71.8 \badgain{-1.1}
& 68.6 \badgain{-13.1}
& \textbf{47.9} \goodgain{+3.4}
& 65.0 \badgain{-6.0}
& 63.3 \badgain{-4.2}
& 1.40M \goodgain{-25.7\%}
\\

Self-GC~\citep{hao2026self}
& 63.5 \badgain{-9.5}
& 71.7 \badgain{-10.0}
& 43.7 \badgain{-0.9}
& 71.0 \neutralgain{0.0}
& 62.5 \badgain{-5.1}
& 1.66M \goodgain{-12.4\%}
\\

LRE~\citep{jahan2026learning}
& 62.6 \badgain{-10.3}
& 63.4 \badgain{-18.2}
& 27.1 \badgain{-17.4}
& 68.0 \badgain{-3.0}
& 55.3 \badgain{-12.2}
& 2.29M \badgain{+21.3\%}
\\

CoMem~\citep{zhang2026comem}
& 56.4 \badgain{-16.5}
& 59.9 \badgain{-21.8}
& 2.5 \badgain{-42.1}
& 58.0 \badgain{-13.0}
& 44.2 \badgain{-23.3}
& \textbf{0.77M} \goodgain{-59.5\%}
\\

SAM~\citep{hu2026sam}
& 67.6 \badgain{-5.3}
& 81.6 \neutralgain{0.0}
& 47.2 \goodgain{+2.7}
& 67.0 \badgain{-4.0}
& 65.9 \badgain{-1.7}
& 1.35M \goodgain{-28.7\%}
\\

SWE-Pruner~\citep{wang2026swe}
& 63.5 \badgain{-9.5}
& 72.6 \badgain{-9.0}
& 40.9 \badgain{-3.7}
& 65.0 \badgain{-6.0}
& 60.5 \badgain{-7.1}
& 1.50M \goodgain{-20.5\%}
\\

Sculptor~\citep{li2026sculptor}
& 51.2 \badgain{-21.7}
& 81.6 \neutralgain{0.0}
& 35.2 \badgain{-9.3}
& 69.0 \badgain{-2.0}
& 59.3 \badgain{-8.3}
& 1.29M \goodgain{-31.5\%}
\\

\midrule
\rowcolor{groupgray}
\multicolumn{7}{c}{\textbf{Released-policy Method}} \\
\midrule

ACM~\citep{li2026acm}
& 38.7 \badgain{-34.2}
& 55.9 \badgain{-25.7}
& 6.2 \badgain{-38.4}
& 34.0 \badgain{-37.0}
& 33.7 \badgain{-33.8}
& 2.36M
\\

\bottomrule
\end{tabular}%
}
\vspace{-10pt}
\end{table*}
\subsection{Main Results}
\label{sec:system_results}

\paragraph{Full benchmark.}
Table~\ref{tab:domain_results} shows the performance and token tradeoff
on all 260 tasks. \pamer{} reduces token usage while retaining a similar
overall score. \pamerplus{} improves the reported overall score at a
slightly higher token budget than \pamer{}. The effects differ across
domains, so the aggregate result should not be read as an improvement
on every type of task.

\subsection{Baselines and Cross Model Evaluation}
\label{sec:baselines-cross-model}

\paragraph{Context-management baselines.}
Table~\ref{tab:method_comparison} compares simple controls and published
methods on Pilot40. Periodic summarization has a higher average
score in the displayed comparison, but uses more tokens. CoMem uses
fewer tokens, but has a substantially lower score. \pamer{} and
\pamerplus{} combine relatively strong task performance with a smaller
token budget than the base agent.

\paragraph{Different model backbones.}
Table~\ref{tab:cross_model} shows reduced token consumption, but model-dependent quality changes.
\pamerplus{} improves the scores of MiMo-V2.5 and Qwen3.8-Flash,
while GPT-5.6-Luna loses performance under both variants.
The results support applicability across different task-executing
backbones, while the memory controller uses the same frozen Qwen3.5-9B
feature extractor across these evaluations. Additional cross-model usage details are reported in Appendix~\ref{app:cross-model-details}.

\begin{table*}[t]
\centering
\caption{
\textbf{Cross-model evaluation on WorkBuddyBench Pilot40.}
Scores are multiplied by $100$.
All models are evaluated on the same fixed 40 tasks, with 10 tasks per domain.
Parenthesized values show changes from the corresponding base agent.
Bold indicates the best overall score and lowest token consumption
within each backbone.
}
\label{tab:cross_model}

\renewcommand{\arraystretch}{1.08}

\resizebox{\textwidth}{!}{%
\begin{tabular}{l|ccccc|c}
\toprule
Model & Code & Office & Sec. & Web & Avg. & Tokens $\downarrow$ \\
\midrule

GPT-5.6-Luna
& 44.8
& 72.1
& 55.7
& 56.0
& \textbf{57.2}
& 676K
\\

\rowcolor{pamerrowsoft}
\quad + \pamer{}
& 54.4 \goodgain{+9.6}
& 57.8 \badgain{-14.3}
& 52.2 \badgain{-3.5}
& 55.0 \badgain{-1.0}
& 54.9 \badgain{-2.3}
& \textbf{81K} \goodgain{-88.0\%}
\\

\rowcolor{pamerrowsoft}
\quad + \pamerplus{}
& 41.4 \badgain{-3.4}
& 57.7 \badgain{-14.4}
& 55.3 \badgain{-0.4}
& 66.0 \goodgain{+10.0}
& 55.1 \badgain{-2.1}
& 97K \goodgain{-85.7\%}
\\

\midrule

GLM-5.3-Flash~\citep{zeng2026glm}
& 43.2
& 85.0
& 66.1
& 61.7
& 64.0
& 762K
\\

\rowcolor{pamerrowsoft}
\quad + \pamer{}
& 44.4 \goodgain{+1.2}
& 86.5 \goodgain{+1.5}
& 70.7 \goodgain{+4.6}
& 66.7 \goodgain{+5.0}
& \textbf{67.1} \goodgain{+3.1}
& 318K \goodgain{-58.3\%}
\\

\rowcolor{pamerrowsoft}
\quad + \pamerplus{}
& 39.6 \badgain{-3.6}
& 84.4 \badgain{-0.6}
& 70.7 \goodgain{+4.6}
& 60.0 \badgain{-1.7}
& 63.7 \badgain{-0.3}
& \textbf{310K} \goodgain{-59.3\%}
\\

\midrule

HY-3
& 49.8
& 80.6
& 59.5
& 46.0
& 59.0
& 1441K
\\

\rowcolor{pamerrowsoft}
\quad + \pamer{}
& 53.6 \goodgain{+3.8}
& 64.8 \badgain{-15.8}
& 64.2 \goodgain{+4.7}
& 51.0 \goodgain{+5.0}
& 58.4 \badgain{-0.6}
& 420K \goodgain{-70.9\%}
\\

\rowcolor{pamerrowsoft}
\quad + \pamerplus{}
& 43.6 \badgain{-6.2}
& 80.0 \badgain{-0.6}
& 67.7 \goodgain{+8.2}
& 55.0 \goodgain{+9.0}
& \textbf{61.6} \goodgain{+2.6}
& \textbf{348K} \goodgain{-75.9\%}
\\

\midrule

MiMo-V2.5~\citep{mimov25}
& 32.4
& 78.4
& 45.9
& 55.0
& 52.9
& 1114K
\\

\rowcolor{pamerrowsoft}
\quad + \pamer{}
& 44.8 \goodgain{+12.4}
& 65.9 \badgain{-12.5}
& 75.0 \goodgain{+29.1}
& 61.0 \goodgain{+6.0}
& 61.7 \goodgain{+8.8}
& \textbf{276K} \goodgain{-75.2\%}
\\

\rowcolor{pamerrowsoft}
\quad + \pamerplus{}
& 59.0 \goodgain{+26.6}
& 71.0 \badgain{-7.4}
& 75.0 \goodgain{+29.1}
& 70.0 \goodgain{+15.0}
& \textbf{68.8} \goodgain{+15.8}
& 282K \goodgain{-74.7\%}
\\

\midrule

Qwen3.8-Flash~\citep{qwen3.8flashnext}
& 76.5
& 85.0
& 62.9
& 85.0
& 77.4
& 1207K
\\

\rowcolor{pamerrowsoft}
\quad + \pamer{}
& 77.6 \goodgain{+1.1}
& 74.8 \badgain{-10.2}
& 79.8 \goodgain{+16.9}
& 80.0 \badgain{-5.0}
& 78.1 \goodgain{+0.7}
& \textbf{869K} \goodgain{-28.0\%}
\\

\rowcolor{pamerrowsoft}
\quad + \pamerplus{}
& 80.1 \goodgain{+3.6}
& 83.9 \badgain{-1.1}
& 72.6 \goodgain{+9.7}
& 100.0 \goodgain{+15.0}
& \textbf{84.2} \goodgain{+6.8}
& 874K \goodgain{-27.6\%}
\\

\bottomrule
\end{tabular}%
}
\vspace{-10pt}
\end{table*}

\FloatBarrier
\subsection{Ablation and Online Behavior}
\label{sec:online-behavior}

Average token consumption does not reveal how context changes within a
long interaction. Figure~\ref{fig:system_analysis} combines the memory
operations, component results, and context trajectories. Compression
repeatedly reduces the active history, after which context grows as new
interaction occurs. Its benefit can therefore persist across later
requests that would otherwise resend the removed history.

In the reported component results, removing compression raises token
usage, whereas removing recall lowers task reward. These patterns are
consistent with complementary roles: compression controls accumulated
history, and recall restores information that has left the active
context. Appendix~\ref{app:case-study} gives a trajectory.
The component batch is separate from the Pilot40 baseline comparison. Additional figure interpretation and reproducibility details are provided in
Appendix~\ref{app:figure-interpretation}.

\FloatBarrier
\section{Conclusion}
\label{sec:conclusion}
Memory needs can be predicted from language model states before the next
action. The examined hidden representations contain information beyond
context length and interaction metadata, while compression and recall
exhibit distinct patterns across model depth. Recent context preserves much
of this predictive signal, whereas selected historical evidence recovers
information that recency alone cannot retain. These findings motivate
\pamer{} and \pamerplus{}, which combine state guided compression with
selective access to external memory. Across the reported evaluations, the
resulting systems consistently reduce context consumption, while changes in
task performance remain dependent on the model and domain. Overall, our
results connect the internal prediction of memory needs with practical
context management for long horizon language model agents.
\subsection*{AI Use Statement}
Generative AI tools were used for language editing and literature retrieval and discovery, including identifying potentially relevant papers, search keywords, and related work, as well as surveying the current state of research on relevant topics. An LLM was also used as an annotator to identify compression and recall needs in the memory-decision corpus, as described in the paper and appendix. The annotation procedure and resulting labels were reviewed by the authors. The final selection, interpretation, and presentation of the literature and experimental results were determined by the authors. All AI-assisted content was reviewed by the authors, who take full responsibility for the final content of this work.

\subsection*{Reproducibility Statement}

We provide detailed descriptions of the data construction, question-grouped
train/validation/test splits, probe architectures and evaluation protocols,
context construction, retrieval procedures, controller configuration, token
accounting, and benchmark task sets in the main paper and appendices.
Hyperparameters and implementation-specific settings used by \pamer{} and
\pamerplus{} are reported where applicable, together with the fixed Pilot40
task manifest and the definitions of the reported evaluation metrics. These details are intended to
support independent reproduction of the reported analyses and system
evaluations.
\FloatBarrier
\clearpage
\setlength{\bibsep}{.5ex plus .8ex}
\IfFileExists{iclr2027_conference.bib}{%
  \IfFileExists{iclr2027_conference.bst}{%
    \bibliographystyle{unsrtnat}
  }{%
    \bibliographystyle{unsrtnat}
  }
  \bibliography{iclr2027_conference}
}{%
  \PackageWarningNoLine{PaMER}{The existing iclr2027_conference.bib file is required to resolve citations}
}

\clearpage
\clearpage
\appendix
\setlength{\emergencystretch}{2em}
\providecommand{\pamerapptablefont}{\small}
\providecommand{\pamerapptablesetup}{%
  \pamerapptablefont\setlength{\tabcolsep}{4.5pt}%
  \renewcommand{\arraystretch}{1.10}}
\newtheorem{pmfinalproposition}{Proposition}[section]
\newenvironment{pmproof}{\par\noindent\textit{Proof.}\ }{\hfill$\square$\par\medskip}
\renewcommand{\topfraction}{0.95}
\renewcommand{\textfraction}{0.05}
\renewcommand{\floatpagefraction}{0.8}

\section{Memory-Decision Data and Probe Analysis}
\label{app:probing-details}

This appendix details the memory-decision corpus, representation probes,
compact-context analysis, online controller, and supplementary evaluations.
The mathematical results describe the stated losses and control rules.
Probing results measure predictive information, rather than identify a
causal circuit used by the language model.

\subsection{Annotated Memory Decisions and Decision-State Reconstruction}
\label{app:annotation}
\label{app:replay}

The analysis corpus contains 72,912 decision points from 2,521
trajectories covering 600 questions. These questions are separate from
the WorkBuddyBench evaluation tasks.

An LLM annotator identifies decision states where compression or recall
is needed. For trajectory $\tau$, let $\mathcal A^{\mathrm c}(\tau)$ and
$\mathcal A^{\mathrm r}(\tau)$ denote the annotated compression and
recall decision sets. At decision step $t$, with $o_t$ denoting the
context available before the next action, the labels are
\begin{equation}
 y_t^m=\mathbf1\!\left[t\in\mathcal A^m(\tau)\right],
 \qquad m\in\{\mathrm c,\mathrm r\}.
 \label{eq:pm_annotation_targets}
\end{equation}
Here $\mathrm c$ denotes compression and $\mathrm r$ denotes recall.
Compression indicates that older context should be condensed while
preserving task-relevant information. Recall indicates that the current
reasoning state needs earlier evidence that is no longer available in
the active context. Recall is evaluated only when valid external memory
exists. A trajectory may contain multiple positive decision points.

Representation extraction uses the provenance-matched OPD iter-2
Qwen3.5-9B checkpoint, which remains frozen during probe fitting. The
annotator output and the future action are used only to define the target
and are not included in the representation input.

All states and trajectory replicas with the same \texttt{query\_id}
remain in one split. Training, validation, and testing use grouped
70/15/15 partitions with seeds 0--4. The seed controls both the grouped
split and readout initialization. Within-trajectory states remain
statistically dependent.

\subsection{Probe Architecture and Training}
\label{app:probe-training}

The primary feature is the final input-token state after the output
RMSNorm of frozen Qwen3.5-9B, $\mathbf h_t\in\mathbb R^{4096}$. In online
use, this feature extractor is separate from the task-executing model;
the controller does not require hidden-state access to an API-backed
acting model. Each target has an independent readout:
\begin{equation}
 \begin{aligned}
 p_{t,\mathrm{lin}}^m&=\sigma((\mathbf w^m)^\top\mathbf h_t+b^m),\\
 p_{t,\mathrm{mlp}}^m&=\sigma\bigl((\mathbf v^m)^\top
 \operatorname{GELU}(\mathbf W^m\mathbf h_t+\mathbf b^m)+c^m\bigr).
 \end{aligned}
 \label{eq:memory-probes}
\end{equation}
The MLP hidden width is 256. Including biases, the linear and MLP probes
have 4,097 and 1,049,089 parameters per target. Only readout parameters
are trained; the language model remains frozen.

The Full Context reference used in the compact-context analysis is the
\textbf{faithful-hint MLP} with hidden width 256. It retains the recorded
context-token hint in the serialized input. Table~\ref{tab:main-probe-reference}
reports its test means. The same configuration supplies the Full Context
teacher and reference in the compact-context analysis. The online
compression heads are the separately trained Recent2 decision-distillation
students, not the Full Context teacher.

\begin{table}[htbp]
\centering
\caption{\textbf{Full Context reference for the compact-context analysis.}
Five-seed mean test metrics for the faithful-hint MLP; F1 uses per-seed
validation-selected thresholds. Values are rounded only for display.}
\label{tab:main-probe-reference}
\pamerapptablesetup
\begin{tabular}{l|rrr}
\toprule
\textbf{Target} & AUROC $\uparrow$ & AUPRC $\uparrow$ & F1 $\uparrow$ \\
\midrule
Compression & \textbf{0.829} & 0.690 & 0.639 \\
Recall & \textbf{0.780} & 0.597 & 0.558 \\
\bottomrule
\end{tabular}
\end{table}

This MLP has the highest reported F1 for both targets among the
hint/capacity settings in Table~\ref{tab:probe-robustness}; it does not
maximize every ranking metric. In particular, the faithful-hint linear
compression probe has higher AUROC. Probe training updates only the
readout. The recorded MLP defaults are AdamW, 50 epochs, learning rate
$10^{-3}$, batch size 256, and hidden width 256. The checkpoint is chosen
by validation label loss; the classification threshold is then chosen
by validation F1.

For each training split,
\begin{equation}
 \mathcal L_m=-\frac1{N_m}\sum_{i=1}^{N_m}
 [\rho_my_i^m\log p_i^m+(1-y_i^m)\log(1-p_i^m)],\qquad
 \rho_m=\frac{N_m^-}{N_m^+}.
 \label{eq:weighted-memory-bce}
\end{equation}
The threshold for a fitted probe is selected by validation F1 and held
fixed on the test set:
\begin{equation}
 \widehat\theta_{m,s}\in\operatorname*{arg\,max}_{\theta\in\Theta}
 \operatorname{F1}_{\mathrm{val},s}
 (\mathbf1\{p_t^m\ge\theta\},y_t^m).
 \label{eq:app-threshold-selection}
\end{equation}
AUROC is the primary ranking metric. AUPRC complements it for imbalanced
targets. Classification recall is distinct from the memory operation
called recall. For positive and negative scores, empirical AUROC is
\begin{equation}
 \widehat{\operatorname{AUROC}}=
 \frac1{N^+N^-}\sum_{i:y_i=1}\sum_{j:y_j=0}
 \left[\mathbf1\{z_i>z_j\}+\frac12\mathbf1\{z_i=z_j\}\right].
 \label{eq:app-auroc}
\end{equation}

\subsection{Properties of the Weighted Readout}
\label{app:weighted-loss-properties}

For $p=\sigma(z)$, the per-example loss has the stable form
\begin{equation}
 \ell_\rho(z,y)=(\rho y+1-y)\log(1+e^z)-\rho yz,
 \qquad
 \frac{\partial\ell_\rho}{\partial z}=(\rho y+1-y)p-\rho y.
 \label{eq:app-weighted-gradient}
\end{equation}
The logarithm is evaluated as
$\max(z,0)+\log(1+e^{-|z|})$ for numerical stability.

\begin{pmfinalproposition}[Population score under class weighting]
\label{prop:pm_weighted_loss}
Let $\eta(\mathbf h)=\Pr(Y=1\mid\mathbf h)$ and $\rho>0$. For
$0<\eta<1$, conditional weighted BCE has the unique minimizer
\begin{equation}
 p_\rho^*(\mathbf h)=\frac{\rho\eta}{1-\eta+\rho\eta},\qquad
 \operatorname{logit}p_\rho^*=\operatorname{logit}\eta+\log\rho.
 \label{eq:app-weighted-optimum}
\end{equation}
\end{pmfinalproposition}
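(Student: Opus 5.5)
The plan is to fix $\mathbf h$ and minimize the conditional weighted risk pointwise over $p\in(0,1)$. Conditioning Eq.~\eqref{eq:weighted-memory-bce} on $\mathbf h$ and writing $\eta=\eta(\mathbf h)$, the population objective is
\begin{equation}
 R(p)=-\rho\eta\log p-(1-\eta)\log(1-p),\qquad p\in(0,1).
\end{equation}
Because the risk decomposes over $\mathbf h$, a minimizer of the full population risk over all measurable score functions is obtained by minimizing $R$ separately for each $\mathbf h$. The normalization $1/N_m$ in the empirical loss only rescales the objective, so it does not affect the minimizer.

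Next I would establish existence and uniqueness through strict convexity and boundary behavior. For $0<\eta<1$ and $\rho>0$, both coefficients $\rho\eta$ and $1-\eta$ are strictly positive. Hence
\begin{equation}
 R''(p)=\frac{\rho\eta}{p^2}+\frac{1-\eta}{(1-p)^2}>0 .
\end{equation}
Moreover, $R(p)\to\infty$ as $p\to0^+$ and as $p\to1^-$. Therefore $R$ has exactly one minimizer in the open interval, and that minimizer is the unique zero of $R'$. This is the step where the hypothesis $0<\eta<1$ matters: if $\eta\in\{0,1\}$, one coefficient vanishes and the infimum is approached only at the boundary. I expect this to be the only real subtlety.

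Setting $R'(p)=-\rho\eta/p+(1-\eta)/(1-p)=0$ gives $\rho\eta(1-p)=(1-\eta)p$, so
\begin{equation}
 p=\frac{\rho\eta}{1-\eta+\rho\eta}.
\end{equation}
As a cross-check, one can instead work in logit space using the gradient already given in Eq.~\eqref{eq:app-weighted-gradient}. Taking the conditional expectation gives $(\rho\eta+1-\eta)p-\rho\eta=0$, which is the same equation. The map $z\mapsto\sigma(z)$ is a bijection onto $(0,1)$, so uniqueness transfers to the logit parametrization.

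Finally, to obtain the logit identity, I would compute
\begin{equation}
 \frac{p_\rho^*}{1-p_\rho^*}=\frac{\rho\eta}{1-\eta}.
\end{equation}
Taking logarithms yields $\operatorname{logit}p_\rho^*=\operatorname{logit}\eta+\log\rho$. The consequence I would note is that class weighting shifts every logit by the same constant $\log\rho$. This monotone shift leaves AUROC (Eq.~\eqref{eq:app-auroc}) unchanged and only moves the threshold selected in Eq.~\eqref{eq:app-threshold-selection}.
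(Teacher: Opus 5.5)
Your proposal is correct and follows the paper's proof essentially step for step: you write down the conditional risk, solve the first-order condition, use $R''>0$ for uniqueness, and take log odds. Your boundary-divergence argument and the logit-space cross-check are harmless additions that the paper omits.
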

\begin{pmproof}
The conditional risk is $R(p)=-\rho\eta\log p-(1-\eta)\log(1-p)$.
Solving $R'(p)=-\rho\eta/p+(1-\eta)/(1-p)=0$ gives the stated
minimizer. Since
$R''(p)=\rho\eta/p^2+(1-\eta)/(1-p)^2>0$, it is unique. Taking log
odds gives the second identity. The inverse relation is
\begin{equation}
 \eta=\frac{p_\rho^*}{\rho-(\rho-1)p_\rho^*}.
 \label{eq:app-weighted-inverse}
\end{equation}
Boundary cases follow by limits.
\end{pmproof}

The weighting changes the probability interpretation of the optimal score.
It does not certify calibration of a finite fitted probe. Validation-based
threshold selection therefore remains part of the protocol.

\subsection{Observable Controls, Probe Capacity, and Memory Hints}
\label{app:surface-controls}
\label{app:probe-robustness}

The controls measure whether length, interaction progress, tool metadata,
or surface text explain the memory-decision signal. Recent2 text controls
use a shorter input than the Full Context hidden-state probes. Their
comparison is not a matched-token-budget experiment.

\begin{table*}[!htbp]
\centering
\caption{\textbf{Memory-decision prediction controls.} Diagnostic rows show mean $\pm$ reported standard deviation over five question-grouped splits. The faithful-hint MLP reference is listed separately with mean metrics.}
\label{tab:surface-controls}
\pamerapptablesetup
\setlength{\tabcolsep}{4pt}
\begin{tabular}{l|cc|cc}
\toprule
\textbf{Signal} & \multicolumn{2}{c|}{\textbf{Compression}} & \multicolumn{2}{c}{\textbf{Recall}} \\
& AUROC $\uparrow$ & AUPRC $\uparrow$ & AUROC $\uparrow$ & AUPRC $\uparrow$ \\
\midrule
Length + turn & $0.751 \pm 0.066$ & $0.533 \pm 0.053$ & $0.552 \pm 0.074$ & $0.336 \pm 0.036$ \\
Last tool type & $0.562 \pm 0.026$ & $0.335 \pm 0.054$ & $0.527 \pm 0.056$ & $0.315 \pm 0.043$ \\
Last action type & $0.583 \pm 0.019$ & $0.346 \pm 0.057$ & $0.535 \pm 0.056$ & $0.319 \pm 0.039$ \\
Length + turn + tool type & $0.758 \pm 0.056$ & $0.543 \pm 0.054$ & $0.549 \pm 0.058$ & $0.361 \pm 0.066$ \\
\midrule
Recent2 TF-IDF & $0.616 \pm 0.027$ & $0.416 \pm 0.072$ & $0.610 \pm 0.056$ & $0.398 \pm 0.075$ \\
Recent2 text embedding & $0.546 \pm 0.015$ & $0.363 \pm 0.047$ & $0.594 \pm 0.041$ & $0.385 \pm 0.086$ \\
\midrule
\rowcolor{pamerrowsoft}
Full Context (faithful-hint MLP) & $0.829$ & $0.690$ & $0.780$ & $0.597$ \\
\bottomrule
\end{tabular}
\end{table*}

The Full Context faithful-hint MLP reference outperforms the tested observable
controls for both targets. Table~\ref{tab:probe-robustness} reports
additional probe-capacity and memory-hint measurements. The no-hint
setting removes explicit memory-token hints, whereas the faithful setting
preserves the recorded hint.

\begin{table*}[!htbp]
\centering
\caption{\textbf{Additional probe-capacity and memory-hint measurements.} Mean AUROC, AUPRC, and validation-threshold F1 over five splits. Each row denotes the named probe and hint configuration.}
\label{tab:probe-robustness}
\pamerapptablesetup
\setlength{\tabcolsep}{6pt}
\begin{tabular}{l|ccc}
\toprule
\textbf{Setting} & AUROC $\uparrow$ & AUPRC $\uparrow$ & F1 $\uparrow$ \\
\midrule
\rowcolor{groupgray}
\multicolumn{4}{c}{\textbf{Compression}} \\
No hint, linear & 0.831 & 0.670 & 0.601 \\
No hint, MLP & 0.827 & 0.665 & 0.606 \\
\midrule
Faithful hint, linear & 0.836 & 0.697 & 0.624 \\
Faithful hint, MLP & 0.829 & 0.690 & \textbf{0.639} \\
\midrule
\rowcolor{groupgray}
\multicolumn{4}{c}{\textbf{Recall}} \\
No hint, linear & 0.765 & 0.585 & 0.526 \\
No hint, MLP & 0.763 & 0.592 & 0.541 \\
\midrule
Faithful hint, linear & 0.774 & 0.598 & 0.543 \\
Faithful hint, MLP & 0.780 & 0.597 & \textbf{0.558} \\
\bottomrule
\end{tabular}
\end{table*}

\subsection{Layerwise Diagnostic Measurements}
\label{app:layerwise-definition}
\label{app:layerwise-metrics}

The layerwise study uses independently fitted no-hint linear readouts on
Qwen3.5-9B to compare depths and pooling locations. The faithful-hint MLP
used as the Full Context reference in the compact-context analysis is
reported separately and is not part of the layerwise comparison. The
final linear diagnostic is retained so that a same-capacity pooling
comparison remains available.

At layer $\ell$, independently trained probes use either the last input
state or mean pooling:
\begin{equation}
 \mathbf h_{t,\mathrm{last}}^{(\ell)}=\phi^{(\ell)}(o_t)[-1],\qquad
 \mathbf h_{t,\mathrm{mean}}^{(\ell)}=
 \frac1{n_t}\sum_{j=1}^{n_t}\phi^{(\ell)}(o_t)[j].
 \label{eq:app-pooling}
\end{equation}
\begin{table*}[!htbp]
\centering
\caption{\textbf{Layerwise linear diagnostics and the compact-context reference.}
Diagnostic rows show mean $\pm$ reported standard deviation over five
question-grouped splits. The bottom row uses the faithful-hint MLP
reference from the compact-context analysis and is not part of a
same-capacity layer comparison.}
\label{tab:layerwise-metrics}
\pamerapptablesetup
\setlength{\tabcolsep}{3.3pt}
\begin{tabular}{ll|cc|cc}
\toprule
\textbf{Representation} & \textbf{Pooling} & \multicolumn{2}{c|}{\textbf{Compression}} & \multicolumn{2}{c}{\textbf{Recall}} \\
& & AUROC $\uparrow$ & AUPRC $\uparrow$ & AUROC $\uparrow$ & AUPRC $\uparrow$ \\
\midrule
Block 4 & Last & $0.757 \pm 0.051$ & $0.563 \pm 0.063$ & $0.722 \pm 0.058$ & $0.526 \pm 0.087$ \\
 & Mean & $0.706 \pm 0.027$ & $0.483 \pm 0.063$ & $0.632 \pm 0.073$ & $0.435 \pm 0.079$ \\
\midrule
Block 8 & Last & $0.799 \pm 0.044$ & $0.623 \pm 0.082$ & $0.768 \pm 0.053$ & $0.594 \pm 0.112$ \\
 & Mean & $0.735 \pm 0.044$ & $0.493 \pm 0.068$ & $0.690 \pm 0.058$ & $0.483 \pm 0.093$ \\
\midrule
Block 16 & Last & $0.810 \pm 0.031$ & $0.644 \pm 0.082$ & $0.780 \pm 0.054$ & $0.583 \pm 0.108$ \\
 & Mean & $0.762 \pm 0.048$ & $0.547 \pm 0.064$ & $0.741 \pm 0.048$ & $0.529 \pm 0.099$ \\
\midrule
Block 24 & Last & $0.814 \pm 0.024$ & $0.649 \pm 0.089$ & $0.765 \pm 0.061$ & $0.577 \pm 0.116$ \\
 & Mean & $0.756 \pm 0.049$ & $0.534 \pm 0.047$ & $0.726 \pm 0.059$ & $0.515 \pm 0.100$ \\
\midrule
Block 32 & Last & $0.824 \pm 0.032$ & $0.659 \pm 0.087$ & $0.743 \pm 0.088$ & $0.560 \pm 0.118$ \\
 & Mean & $0.708 \pm 0.059$ & $0.478 \pm 0.069$ & $0.692 \pm 0.045$ & $0.474 \pm 0.066$ \\
\midrule
Final RMSNorm & Last & $0.831 \pm 0.036$ & $0.670 \pm 0.100$ & $0.765 \pm 0.067$ & $0.584 \pm 0.106$ \\
 & Mean & $0.712 \pm 0.058$ & $0.479 \pm 0.074$ & $0.696 \pm 0.047$ & $0.479 \pm 0.067$ \\
\midrule
\multicolumn{6}{l}{\textit{Compact-context Full Context reference: faithful-hint MLP}} \\
\rowcolor{pamerrowsoft}
Final RMSNorm & Last & $0.829$ & $0.690$ & $0.780$ & $0.597$ \\
\bottomrule
\end{tabular}
\end{table*}

At every intermediate checkpoint with both pooling measurements, the
last-token linear readout outperforms mean pooling. Recall reaches its
highest listed intermediate-layer mean at block 16, while compression
AUROC increases across the listed intermediate last-token checkpoints.
The compact-context Full Context reference is reported separately and is
not used to claim a matched readout-capacity comparison with every
diagnostic row. The final state is used for online compression; the recall probe
remains an analysis target, not a retrieval gate.

\begin{pmfinalproposition}[Logit separation is not a ranking metric]
\label{prop:pm_logit_scale}
For $z=\mathbf w^\top\mathbf h+b$ with finite class-conditional means,
\begin{equation}
 \Delta=\mathbb E[z\mid Y=1]-\mathbb E[z\mid Y=0]
 =\mathbf w^\top(\boldsymbol\mu_+-\boldsymbol\mu_-).
 \label{eq:app-gap-projection}
\end{equation}
Replacing $z$ by $z'=az+c$, $a>0$, preserves AUROC and changes the gap
to $\Delta'=a\Delta$.
\end{pmfinalproposition}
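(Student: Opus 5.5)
The proof has two parts, both elementary: a linearity-of-expectation computation for the gap identity, and an order-preservation argument for AUROC under a positive affine map.

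\textbf{Gap identity.} Write $\boldsymbol\mu_\pm=\mathbb E[\mathbf h\mid Y=\pm]$. These are finite by assumption; more precisely, finiteness of $\mathbb E[\mathbf h\mid Y=y]$ gives finiteness of $\mathbb E[z\mid Y=y]$. Because $\mathbf w$ and $b$ are fixed (non-random) for the fitted readout, conditional linearity gives
\begin{equation*}
\mathbb E[z\mid Y=y]=\mathbf w^\top\mathbb E[\mathbf h\mid Y=y]+b .
\end{equation*}
Subtracting the case $y=0$ from the case $y=1$ cancels the bias $b$ and yields \eqref{eq:app-gap-projection}. The statement leaves implicit that both classes have positive probability, so that the conditional expectations are well defined. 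I will state this explicitly; it matches the setting of Proposition~\ref{prop:pm_weighted_loss}, where $0<\eta<1$.

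\textbf{Invariance of AUROC.} I will work with the empirical AUROC \eqref{eq:app-auroc}; the population version $\Pr(Z_+>Z_-)+\tfrac12\Pr(Z_+=Z_-)$, with independent positive and negative draws, is handled the same way. For $a>0$ the map $u\mapsto au+c$ is strictly increasing. Hence for every pair $(i,j)$ we have $\mathbf 1\{z_i'>z_j'\}=\mathbf 1\{z_i>z_j\}$ and $\mathbf 1\{z_i'=z_j'\}=\mathbf 1\{z_i=z_j\}$. Every summand of \eqref{eq:app-auroc} is therefore unchanged, and so is the AUROC. For the gap, linearity again gives $\mathbb E[z'\mid Y=y]=a\,\mathbb E[z\mid Y=y]+c$; differencing removes $c$ and leaves $\Delta'=a\Delta$. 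The same argument applies verbatim to the empirical means used in \eqref{eq:layerwise-gap}.

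\textbf{Closing remark.} Nothing here is hard. The only step needing care is showing that ties are preserved as well as strict orderings, which is exactly why strictness of the increasing map, i.e.\ $a>0$, is required. I will end by noting the consequence the proposition is meant to support. Since $a$ can be any positive number, $\Delta$ can be made arbitrarily large or small at fixed AUROC. Logit gaps are therefore comparable across layers only up to probe scale, which justifies treating AUROC as the primary metric in Figure~\ref{fig:layerwise}.
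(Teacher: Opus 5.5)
Your proof is correct and follows the same route as the paper. You use linearity of (conditional) expectation, with the intercept cancelling, for both the gap identity and $\Delta'=a\Delta$, and you use the strict monotonicity of $u\mapsto au+c$ to preserve strict orderings and ties, so that every summand of Equation~\eqref{eq:app-auroc} is unchanged; your explicit positive-class-probability assumption and the population-AUROC remark are harmless refinements that the paper leaves implicit.
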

\begin{pmproof}
Linearity of expectation gives the displayed gap; the intercept cancels.
For each positive--negative pair,
$z_i'>z_j'$ if and only if $z_i>z_j$, and ties are also preserved.
Every summand in Equation~\eqref{eq:app-auroc} is consequently unchanged.
Taking the difference of conditional expectations of $az+c$ gives
$\Delta'=a\Delta$.
\end{pmproof}

Layerwise logit gaps are therefore interpreted together with ranking
performance, not as scale-free measures of semantic information. The
capacity-control and layerwise tables preserve their respective source
aggregations: for example, no-hint linear recall AUPRC is 0.584743 in the
capacity-control export and 0.584336 in the layerwise export. Their
three-decimal displays, 0.585 and 0.584, are not two roundings of one
identical underlying aggregate.

\subsection{Paired Compact-Context Comparisons}
\label{app:paired-comparisons}

For paired split-level differences $d_s=M_{A,s}-M_{B,s}$ and
$\bar d=S^{-1}\sum_s d_s$, the reported two-sided sign-flip calculation is
\begin{equation}
 p_{\mathrm{flip}}=2^{-S}\sum_{\boldsymbol\epsilon\in\{-1,1\}^S}
 \mathbf1\left\{\left|S^{-1}\sum_s\epsilon_sd_s\right|\ge|\bar d|\right\}.
 \label{eq:app-sign-flip}
\end{equation}
The inferential interpretation requires sign exchangeability. With $S=5$,
the observed sign pattern and its global negation have the same absolute
statistic, so $p_{\mathrm{flip}}\ge2/2^5=0.0625$. The five-split analysis
does not establish a two-sided $p<0.05$ result under this calculation.

\begin{table}[htbp]
\centering
\caption{\textbf{Paired compact-context differences across split seeds.} C and R denote compression and recall; positive differences favor the first method. The listed tests concern compact-context interventions and do not reuse significance results from an earlier main-reference configuration.}
\label{tab:paired-comparisons}
\pamerapptablesetup
\begin{tabular}{l|crr}
\toprule
Comparison & Target / metric & Mean difference & Sign-flip $p$\\
\midrule
Recent2 KD $-$ Recent2 & C / AUROC & +0.0058 & 0.500\\
 & R / AUROC & $-$0.0080 & 0.500\\
 & C / F1 & +0.0570 & 0.0625\\
 & R / F1 & +0.0278 & 0.3125\\
Raw Top5 $-$ Hidden Bank Recent5 & C / AUROC & +0.0627 & 0.0625\\
 & R / AUROC & +0.0389 & 0.1875\\
\bottomrule
\end{tabular}
\end{table}

\section{Compact Context and Historical Evidence}
\label{app:compact-results}

\subsection{Context Construction and Evaluation}
\label{app:compact-expanded}

Let $P_t$ be the system and task prefix, $\mathcal R_t$ the two most recent
complete interaction blocks, and $\mathcal E_t$ retrieved historical
evidence. The principal constructions are
\begin{equation}
 C_t^{\mathrm{recent}}=P_t\oplus\mathcal R_t,\qquad
 C_t^{\mathrm{recover}}=P_t\oplus\mathcal E_t\oplus\mathcal R_t.
 \label{eq:app-context-constructions}
\end{equation}
The prefix and annotation labels are held fixed when changing history.
The \texttt{Current} condition is identified as \emph{prefix only} in the
saved results. It therefore supplies a no-raw-history reference; the
paper does not treat it as proof of a causal contribution of the task
prefix. Other conditions add the specified recent blocks or historical
evidence.
Relative input cost on the evaluated states is
\begin{equation}
 B(C)=100\frac{\sum_i\operatorname{Tok}(C_i)}
 {\sum_i\operatorname{Tok}(C_i^{\mathrm{full}})}.
 \label{eq:app-context-budget}
\end{equation}
This ratio measures offline text input, not full-task token consumption
or the storage cost of a hidden-state bank. Agreement with the Full
Context teacher is a separate measurement:
\begin{equation}
 \operatorname{Agr}_m=N_m^{-1}\sum_i
 \mathbf1\{\widehat y_{i,\mathrm{candidate}}^m=
            \widehat y_{i,\mathrm{full}}^m\}.
 \label{eq:app-agreement}
\end{equation}
Agreement does not replace evaluation against the annotation labels.

\begin{table*}[!htbp]
\centering
\caption{\textbf{Compact-context and historical-evidence comparison.} C and R denote compression and recall; dots denote unreported teacher agreement.}
\label{tab:compact-expanded}
\pamerapptablesetup
\setlength{\tabcolsep}{3.3pt}
\begin{tabular}{l|r|rr|rr|rr}
\toprule
\textbf{Representation} & \shortstack{Tokens(\%)} & \multicolumn{2}{c|}{\textbf{Compression}} & \multicolumn{2}{c|}{\textbf{Recall}} & \multicolumn{2}{c}{\textbf{Agreement}} \\
& & AUROC & F1 & AUROC & F1 & Comp. & Recall \\
\midrule
Full Context & 100.0 & 0.829 & 0.639 & 0.780 & 0.558 & $\cdot$ & $\cdot$ \\
\midrule
\rowcolor{groupgray}
\multicolumn{8}{c}{\textbf{Recent context}} \\
Current & 8.8 & 0.604 & 0.451 & 0.717 & 0.532 & $\cdot$ & $\cdot$ \\
Recent1 & 17.7 & 0.746 & 0.520 & 0.725 & 0.524 & $\cdot$ & $\cdot$ \\
Recent2 & 26.0 & 0.776 & 0.541 & 0.761 & 0.520 & $\cdot$ & $\cdot$ \\
Recent4 & 41.9 & 0.795 & 0.550 & 0.755 & 0.555 & $\cdot$ & $\cdot$ \\
\midrule
\rowcolor{groupgray}
\multicolumn{8}{c}{\textbf{Scalar state and decision distillation}} \\
Recent2 + scalar state & 26.0 & 0.767 & 0.569 & 0.746 & 0.552 & $\cdot$ & $\cdot$ \\
Recent4 + scalar state & 41.9 & 0.792 & 0.579 & 0.757 & 0.586 & $\cdot$ & $\cdot$ \\
\rowcolor{pamerrowsoft}
Recent2 + decision KD & 26.0 & 0.781 & 0.598 & 0.753 & 0.548 & 0.819 & 0.840 \\
Recent4 + decision KD & 41.9 & 0.794 & 0.571 & 0.758 & 0.530 & 0.824 & 0.857 \\
\midrule
\rowcolor{groupgray}
\multicolumn{8}{c}{\textbf{Recovered raw evidence}} \\
Selective Raw Top1 & 35.9 & 0.799 & 0.569 & 0.761 & 0.542 & 0.826 & 0.817 \\
Selective Raw Top3 & 52.3 & 0.809 & 0.565 & 0.766 & 0.554 & 0.826 & 0.834 \\
\rowcolor{pamerrowsoft}
Selective Raw Top5 & 66.2 & 0.832 & 0.602 & 0.769 & 0.567 & 0.859 & 0.865 \\
\midrule
\rowcolor{groupgray}
\multicolumn{8}{c}{\textbf{Hidden and recurrent memory}} \\
Hidden Bank Uniform5 & 16.8 & 0.722 & 0.526 & 0.729 & 0.540 & 0.731 & 0.779 \\
Hidden Bank Recent5 & 16.8 & 0.769 & 0.529 & 0.730 & 0.520 & 0.785 & 0.744 \\
Hidden Bank Similarity1 & 16.8 & 0.716 & 0.514 & 0.713 & 0.507 & 0.729 & 0.721 \\
Hidden Bank Similarity3 & 16.8 & 0.722 & 0.509 & 0.703 & 0.510 & 0.729 & 0.722 \\
Hidden Bank Similarity5 & 16.8 & 0.728 & 0.555 & 0.712 & 0.493 & 0.722 & 0.747 \\
Hidden Bank Similarity8 & 16.8 & 0.725 & 0.552 & 0.703 & 0.535 & 0.758 & 0.736 \\
GRU512 & 16.8 & 0.745 & 0.499 & 0.727 & 0.513 & 0.773 & 0.757 \\
Hybrid GRU + Similarity5 & 16.8 & 0.734 & 0.517 & 0.672 & 0.508 & 0.734 & 0.684 \\
\bottomrule
\end{tabular}
\end{table*}

\begin{figure*}[t]
\centering
\includegraphics[width=0.95\textwidth]{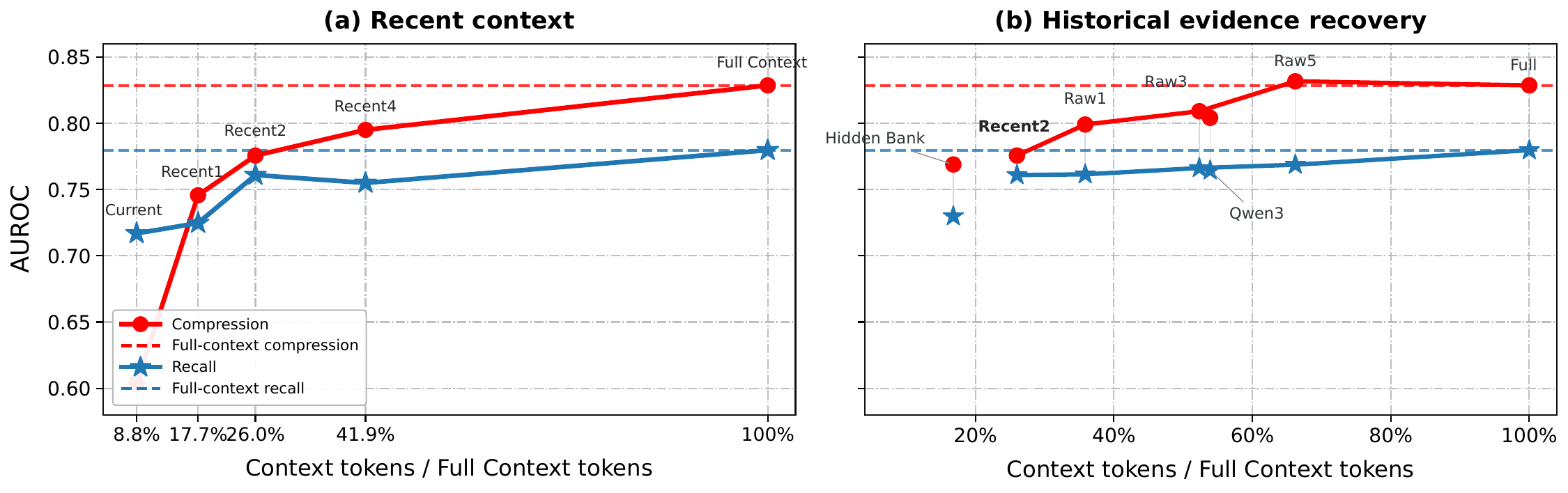}
\caption{\textbf{Decision information at different input budgets.} Left: recent context. Right: historical evidence recovery. Token fractions refer to the corresponding offline comparison.}
\label{fig:compact-tradeoff}
\end{figure*}

Recent2 uses $26.0\%$ of the full textual input and retains much of the
memory-decision signal. Additional history is useful in some, but not all,
forms: selected raw evidence outperforms the evaluated hidden-bank
alternatives, while recurrent and scalar-state additions do not
consistently improve over Recent2. Token budgets differ across these
conditions and are reported alongside predictive performance.

\subsection{Decision Distillation}
\label{app:decision-distillation}

For student logit $z$, annotation label $y$, and fixed Full Context teacher score
$q\in[0,1]$, the compression heads use
\begin{equation}
 \ell_{\mathrm{KD}}(z,y,q)=\ell_\rho(z,y)
 +\lambda\operatorname{BCE}(\sigma(z),q),\qquad\lambda=0.5.
 \label{eq:app-kd-loss}
\end{equation}
The teacher is used only during training. The logit gradient is
\begin{equation}
 \partial_z\ell_{\mathrm{KD}}=
 (\rho y+1-y+\lambda)\sigma(z)-\rho y-\lambda q.
 \label{eq:app-kd-gradient}
\end{equation}

\begin{pmfinalproposition}[Distillation as weighted soft-label learning]
\label{prop:pm_distillation}
For $\rho>0$, $\lambda\ge0$, and binary $y$, define
\begin{equation}
 a=\rho y+1-y+\lambda,\qquad
 \widetilde y=\frac{\rho y+\lambda q}{a}.
 \label{eq:app-kd-target}
\end{equation}
Then $\ell_{\mathrm{KD}}=a\operatorname{BCE}(\sigma(z),\widetilde y)$.
Moreover, conditional on the student's compact representation $h$, let
$\eta(h)=\Pr(Y=1\mid h)$ and $\mu(h)=\mathbb E[q\mid h]$. The optimal
unrestricted student probability is
\begin{equation}
 p_\lambda^*(h)=\frac{\rho\eta(h)+\lambda\mu(h)}
 {1+(\rho-1)\eta(h)+\lambda}.
 \label{eq:pm_kd_population}
\end{equation}
\end{pmfinalproposition}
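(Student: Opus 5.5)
The proof has two parts: an algebraic identity for the loss, and a population minimization that reuses the argument of Proposition~\ref{prop:pm_weighted_loss}.

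\textbf{Part 1: the identity.} Expand both losses in the stable log-sum-exp form of Equation~\eqref{eq:app-weighted-gradient}. The weighted term is
\begin{equation*}
 \ell_\rho(z,y)=(\rho y+1-y)\log(1+e^z)-\rho y z ,
\end{equation*}
and the soft-label term is
\begin{equation*}
 \operatorname{BCE}(\sigma(z),q)=-q\log\sigma(z)-(1-q)\log(1-\sigma(z))=\log(1+e^z)-qz ,
\end{equation*}
using $-\log\sigma(z)=\log(1+e^z)-z$ and $-\log(1-\sigma(z))=\log(1+e^z)$. Adding them gives
\begin{equation*}
 \ell_{\mathrm{KD}}=(\rho y+1-y+\lambda)\log(1+e^z)-(\rho y+\lambda q)z=a\log(1+e^z)-a\widetilde y\,z=a\operatorname{BCE}(\sigma(z),\widetilde y).
\end{equation*}
For this rewriting to be meaningful, $a>0$ must hold: $a\ge\rho>0$ when $y=1$, and $a\ge1$ when $y=0$. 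We also need $\widetilde y\in[0,1]$, since this makes $\widetilde y$ a legitimate soft label. This follows from $0\le\rho y+\lambda q\le\rho y+\lambda\le a$. As a check, differentiating the identity should reproduce the gradient in Equation~\eqref{eq:app-kd-gradient}.

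\textbf{Part 2: the population minimizer.} Condition on $h$ and write $p=\sigma(z)$. The conditional risk is
\begin{equation*}
 R(p)=\mathbb E[\ell_{\mathrm{KD}}\mid h]=-\bigl(\rho\eta+\lambda\mu\bigr)\log p-\bigl(1-\eta+\lambda(1-\mu)\bigr)\log(1-p).
\end{equation*}
To obtain it, expand each term's coefficient. We have $\mathbb E[\rho y\mid h]=\rho\eta$ and $\mathbb E[\lambda q\mid h]=\lambda\mu$. The coefficient of $-\log(1-p)$ comes from $(1-y)$ and $\lambda(1-q)$. The one step that needs care is this conditional expectation: the loss is linear in $y$ and in $q$ separately, so only the marginal conditional means $\eta(h)$ and $\mu(h)$ enter. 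No independence between $y$ and $q$ given $h$ is required.

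The risk has the form $-A\log p-B\log(1-p)$ with $A,B\ge0$ and $A+B>0$. It is therefore strictly convex on $(0,1)$ whenever $A,B>0$, and its stationary point is $p^*=A/(A+B)$, exactly as in the proof of Proposition~\ref{prop:pm_weighted_loss}. Summing the coefficients gives
\begin{equation*}
 A+B=\rho\eta+1-\eta+\lambda=1+(\rho-1)\eta+\lambda ,
\end{equation*}
which yields Equation~\eqref{eq:pm_kd_population}.

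\textbf{Boundary cases and checks.} If $A=0$ or $B=0$, the infimum is approached as $p\to0$ or $p\to1$, matching the formula in the limit. The case $\lambda=0$ recovers Proposition~\ref{prop:pm_weighted_loss}, which serves as a consistency check.
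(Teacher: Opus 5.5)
Your proof is correct and takes essentially the same route as the paper. You match the coefficients of the two cross-entropy terms; you do this in the logit form $a\log(1+e^z)-a\widetilde y\,z$, while the paper reads off the coefficients of $-\log p$ and $-\log(1-p)$, but it is the same computation, after which you minimize the conditioned risk $-A\log p-B\log(1-p)$ pointwise to obtain $p^*=A/(A+B)$. The paper also records the excess-risk identity $(A+B)\operatorname{KL}(\operatorname{Bern}(p_\lambda^*)\|\operatorname{Bern}(p))$, which the statement itself does not require.
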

\begin{pmproof}
Writing $p=\sigma(z)$, the coefficient of $-\log p$ in the combined loss
is $\rho y+\lambda q$, and that of $-\log(1-p)$ is
$1-y+\lambda(1-q)$. Their sum is $a>0$, and their ratio to $a$ gives
the soft target. After conditioning on $h$, the coefficients become
$A=\rho\eta+\lambda\mu$ and
$B=1-\eta+\lambda(1-\mu)$.
Minimizing $-A\log p-B\log(1-p)$ gives $p^*=A/(A+B)$, yielding
Equation~\eqref{eq:pm_kd_population}. When $A,B>0$, strictly positive
second derivative proves uniqueness. Endpoint solutions follow by
continuity. The conditional excess risk is
\begin{equation}
 R_h(p)-R_h(p_\lambda^*)=(A+B)\operatorname{KL}
 (\operatorname{Bern}(p_\lambda^*)\|\operatorname{Bern}(p)),
 \label{eq:pm_kd_excess}
\end{equation}
which follows directly by subtracting the two cross-entropies.
\end{pmproof}

Distillation transfers the teacher's conditional score signal into the
compact representation. It does not require the teacher to be perfectly
calibrated, nor imply that every metric improves. Empirically, compression
F1 improves more than compression AUROC, while recall AUROC decreases in
the Recent2 KD condition.

\subsection{Decision and Ranking Stability}
\label{app:decision-stability}

\begin{pmfinalproposition}[Stability under compact-context perturbations]
\label{prop:pm_compact_stability}
Apply the same fixed linear readout $z=\mathbf w^\top\mathbf h+b$ to full
and compact representations. Suppose
$\|\mathbf h_f-\mathbf h_c\|_2\le\varepsilon$ and put
$\delta=\|\mathbf w\|_2\varepsilon$. For a probability threshold $0<\theta<1$ and
$a=\operatorname{logit}\theta$, the two decisions agree whenever
\begin{equation}
 |z_f-a|>\delta.
 \label{eq:app-margin-condition}
\end{equation}
If the score bound holds almost surely under the evaluated state
distribution (including the empirical distribution on fixed examples), then
\begin{equation}
 \Pr(\widehat Y_f\ne\widehat Y_c)\le\Pr(|Z_f-a|\le\delta).
 \label{eq:app-disagreement-bound}
\end{equation}
For fixed sets containing $N^+>0$ positive and $N^->0$ negative examples,
it also implies
\begin{equation}
 \bigl|\widehat{\operatorname{AUROC}}_f-
       \widehat{\operatorname{AUROC}}_c\bigr|
 \le\frac1{N^+N^-}\sum_{i:y_i=1}\sum_{j:y_j=0}
 \mathbf1\{|z_{f,i}-z_{f,j}|\le2\delta\}.
 \label{eq:pm_auc_stability}
\end{equation}
\end{pmfinalproposition}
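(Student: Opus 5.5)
The whole argument rests on one Cauchy--Schwarz bound on the score perturbation. Since both representations pass through the same readout,
\[
|z_f-z_c|=|\mathbf w^\top(\mathbf h_f-\mathbf h_c)|\le\|\mathbf w\|_2\,\|\mathbf h_f-\mathbf h_c\|_2\le\|\mathbf w\|_2\,\varepsilon=\delta .
\]
The three claims then follow in order.

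\textbf{Decision agreement.} Because $\sigma$ is strictly increasing, the event $\sigma(z)\ge\theta$ is the same as $z\ge a$ with $a=\operatorname{logit}\theta$. Suppose $|z_f-a|>\delta$ and split into two cases.
\begin{itemize}
\item If $z_f>a+\delta$, then $z_c\ge z_f-\delta>a$, so both decisions are positive.
\item If $z_f<a-\delta$, then $z_c\le z_f+\delta<a$, so both decisions are negative.
\end{itemize}
This proves \eqref{eq:app-margin-condition}.

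\textbf{Disagreement bound.} By the first claim, $\{\widehat Y_f\ne\widehat Y_c\}\subseteq\{|Z_f-a|\le\delta\}$, up to the null set where the score bound fails. Taking probabilities gives \eqref{eq:app-disagreement-bound}. This applies to the empirical distribution as well, since there the bound holds on every example.

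\textbf{AUROC bound.} I would compare the two AUROC sums in \eqref{eq:app-auroc} pair by pair. For each positive--negative pair $(i,j)$, define the tie-aware summand
\[
s(u)=\mathbf1\{u>0\}+\tfrac12\mathbf1\{u=0\},
\]
with $u$ the score difference. The pair's difference of summands is $s(z_{f,i}-z_{f,j})-s(z_{c,i}-z_{c,j})$, which lies in $[-1,1]$. The score differences satisfy
\[
|(z_{f,i}-z_{f,j})-(z_{c,i}-z_{c,j})|\le 2\delta .
\]
Hence if $|z_{f,i}-z_{f,j}|>2\delta$, the two differences share a strict sign, neither is zero, and the pair's summands coincide. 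Pairs with $|z_{f,i}-z_{f,j}|\le2\delta$ contribute at most $1$ in absolute value.

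Applying the triangle inequality to the averaged sum and dividing by $N^+N^-$ gives \eqref{eq:pm_auc_stability}.

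\textbf{Main obstacle.} The only delicate point is the tie convention and the boundary case $|z_{f,i}-z_{f,j}|=2\delta$ exactly. This is why the indicator uses $\le2\delta$, and why I would check separately that a strict margin $>2\delta$ rules out both sign flips and newly created ties.
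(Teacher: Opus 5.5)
Your proof is correct and follows essentially the same route as the paper. Both rest on the single Cauchy--Schwarz bound $|z_f-z_c|\le\delta$, the threshold-crossing argument giving decision agreement and the disagreement inclusion, and a pairwise AUROC comparison in which gaps exceeding $2\delta$ keep their strict sign while every other pair changes its summand by at most one. Your explicit treatment of ties and the final triangle inequality only spells out steps the paper states more tersely.
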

\begin{pmproof}
Cauchy--Schwarz gives
$|z_f-z_c|\le\|\mathbf w\|_2\|\mathbf h_f-\mathbf h_c\|_2\le\delta$.
A logit farther than $\delta$ from $a$ cannot cross the threshold, proving
the decision statements. For any positive--negative pair, the difference
between its compact and full score gaps is at most $2\delta$. A full gap
with absolute value greater than $2\delta$ therefore retains its sign.
Its AUROC summand is unchanged. Every remaining pair can change that
summand by at most one, including ties. Summing these bounds and dividing
by $N^+N^-$ proves Equation~\eqref{eq:pm_auc_stability}.
\end{pmproof}

These sufficient conditions concern the same readout and a stated
representation bound. They are not numerical guarantees for the separately
trained probes in the experimental tables.

\subsection{Retrieval Controls and Evidence Budgets}
\label{app:retrieval-bridge}
\label{app:retrieval-budgets}

The offline retrieval study compares saved raw historical blocks.
Selective Raw Top$K$ uses token-frequency cosine similarity between the
current last block and eligible older raw blocks, excluding the retained
Recent2 blocks. This lexical selector is not a learned oracle and does
not use future actions or labels to rank the evidence. Qwen conditions
instead use Qwen3-Embedding-8B and the recorded task/Recent2 query
serialization. Offline Qwen retrieval returns raw text, whereas the
canonical online \pamer{} controller retrieves stored summaries. The
two settings have different retrieval units; offline raw-block results
do not establish an online summary-retrieval effect.

\begin{table*}[!htbp]
\centering
\caption{\textbf{Offline retrieval controls.} Token percentages are relative to Full Context; agreement is measured against the corresponding teacher.}
\label{tab:retrieval-bridge}
\pamerapptablesetup
\setlength{\tabcolsep}{6pt}
\begin{tabular}{l|r|rr|rr}
\toprule
\textbf{Retriever} & \shortstack{Tokens(\%)} & \multicolumn{2}{c|}{\textbf{AUROC}} & \multicolumn{2}{c}{\textbf{Agreement}} \\
& & Comp. & Recall & Comp. & Recall \\
\midrule
Random Top3 & 47.2 & 0.793 & 0.752 & 0.786 & 0.849 \\
Recency Top3 & 46.2 & 0.792 & 0.753 & 0.818 & 0.857 \\
\midrule
Selective Raw Top1 & 35.9 & 0.799 & 0.761 & 0.826 & 0.817 \\
Selective Raw Top3 & 52.3 & 0.809 & 0.766 & 0.826 & 0.834 \\
\rowcolor{pamerrowsoft}
Selective Raw Top5 & 66.2 & 0.832 & 0.769 & 0.859 & 0.865 \\
\midrule
Qwen Top1 & 37.8 & 0.794 & 0.753 & 0.793 & 0.839 \\
\rowcolor{pamerrowsoft}
Qwen Top3 & 53.9 & 0.804 & 0.764 & 0.838 & 0.838 \\
Qwen Top5 & 67.7 & 0.816 & 0.775 & 0.834 & 0.872 \\
\bottomrule
\end{tabular}
\end{table*}

Increasing retrieval cardinality changes both evidence and input cost.
Top3 is an intermediate operating point, not a measured maximizer of
AUROC. For complete-block costs $\ell_j\ge0$, an evidence budget obeys
\begin{equation}
 \sum_{j\in\mathcal J_t}\ell_j\le B_{\mathrm{evidence}}.
 \label{eq:app-evidence-cap}
\end{equation}
The prefix, recent context, and serialization overhead are additional
parts of the final request.

\begin{table*}[!htbp]
\centering
\caption{\textbf{Qwen Top3 evidence budgets.} Mean, P95, and maximum describe recalled evidence tokens, not total prompt length.}
\label{tab:token-caps}
\pamerapptablesetup
\setlength{\tabcolsep}{6pt}
\begin{tabular}{l|rrr|rr}
\toprule
\textbf{Variant} & \multicolumn{3}{c|}{\textbf{Recalled tokens}} & \multicolumn{2}{c}{\textbf{Agreement}} \\
& Mean & P95 & Max & Comp. & Recall \\
\midrule
Top3 cap 8K & 5,540 & 8,021 & 8,192 & 0.818 & 0.835 \\
Top3 cap 16K & 8,765 & 14,801 & 16,383 & 0.843 & 0.837 \\
Top3 uncapped & 9,167 & 16,509 & 32,245 & 0.838 & 0.838 \\
\bottomrule
\end{tabular}
\end{table*}

The 16K cap retains agreement close to the uncapped condition, whereas
the 8K cap reduces agreement. These are offline evidence-budget
measurements, not a sweep of online task reward.

\section{Online Memory Controller}
\label{app:controller-contract}

\subsection{Compression Voting and Deferred Retrieval}
\label{app:pamer-algorithm}

After removing duplicate blocks already represented in stored memories,
write the working history as
$C_t^{\mathrm{eff}}=P_t\oplus\mathcal O_t\oplus\mathcal R_t$.
Five Recent2 decision-distillation MLP heads read the final normalized
Qwen3.5-9B state of the serialized $P_t\oplus\mathcal R_t$ input.
The saved probe system prompt and context-token hint are applied before
serialization. With per-seed validation thresholds $\theta_s$,
\begin{equation}
 v_{t,s}=\mathbf1\{p_{t,s}^{\mathrm c}\ge\theta_s\},\qquad
 u_t=\mathbf1\{\operatorname{Tok}(\mathcal O_t)\ge1024\}
 \mathbf1\left\{\sum_{s=1}^5v_{t,s}\ge3\right\}.
 \label{eq:app-voting}
\end{equation}
On successful compression, an entry stores summary $s_j$, its embedding,
and protocol-aware identities $\mathcal B_j$ of the covered raw blocks:
\begin{equation}
 M_j=(s_j,\mathbf e_j,\mathcal B_j),\qquad\mathbf e_j=E(s_j).
 \label{eq:app-memory-entry}
\end{equation}
The encoder is Qwen3-Embedding-8B with 1,024 output dimensions, distinct
from the 4,096-dimensional probe feature. Raw blocks leave active context
only after successful storage. The same request receives neither the new
summary nor an immediate recall; its reduced context is
$P_t\oplus\mathcal R_t$.

On a later request, the task and Recent2 define an embedding query.
For nonzero embeddings,
\begin{equation}
 a_{t,j}=\frac{\mathbf q_t^\top\mathbf e_j}
 {\|\mathbf q_t\|_2\|\mathbf e_j\|_2},\qquad
 \mathcal I_t=\operatorname{TopK}_j(a_{t,j}),\quad K=3.
 \label{eq:app-summary-retrieval}
\end{equation}
At most the available number of memories is retrieved. Query text is
serialized from the task and Recent2 and limited to its final 30,000
characters. Selected summaries are inserted after the prefix and before
\emph{all} retained raw blocks, not in place of unarchived older blocks:
\begin{equation}
 C_t^{\mathrm{recall}}=P_t\oplus\mathcal E_t\oplus\mathcal O_t\oplus\mathcal R_t.
 \label{eq:app-online-retained-history}
\end{equation}
Here $\mathcal E_t$ contains the selected summaries. The learned recall
probe does not gate this branch. A failed probe skips compression but
may still use existing memory; a successful compression request never
performs recall on that same request.

\begin{algorithm}[t]
\caption{Online context transformation with \pamer{}}
\label{alg:pamer-online}
\DontPrintSemicolon
\KwRequire{Canonical request $C_t^{\mathrm{raw}}$, persisted memory
$\mathcal M_t$, frozen Qwen3.5-9B, five Recent2 KD heads and thresholds.}
\If{the session identifier or conversation/tool structure is invalid}{
  \Return $C_t^{\mathrm{raw}}$\;
}
Load valid memory; if its saved state is corrupt, discard the invalid state and start with empty memory\;
Construct $C_t^{\mathrm{eff}}$ by removing raw blocks already covered by valid persisted memories\;
Split $C_t^{\mathrm{eff}}$ into $P_t$, $\mathcal O_t$, and $\mathcal R_t$\;
Set $u_t\leftarrow0$\;
\If{$\operatorname{Tok}(\mathcal O_t)\ge1024$}{
  Extract the final Qwen3.5-9B state and evaluate the five heads\;
  \eIf{feature extraction and head evaluation succeed}{
    Set $u_t\leftarrow\mathbf1\{\sum_s\mathbf1[p_{t,s}\ge\theta_s]\ge3\}$\;
  }{
    Record the probe error and leave $u_t=0$\tcp*[r]{Existing recall remains possible}
  }
}
\If{$u_t=1$}{
  Generate a summary and its embedding, and validate protocol-aware source identities\;
  Require successful storage and a strict reduction in active request tokens\;
  \eIf{all transaction checks succeed}{
    Commit the memory and \Return $P_t\oplus\mathcal R_t$\tcp*[r]{No same-request recall}
  }{
    Commit no partial memory and \Return $C_t^{\mathrm{raw}}$\;
  }
}
\If{$\mathcal M_t\ne\varnothing$}{
  Retrieve at most three summaries using the task and Recent2\;
  Construct $C'_t=P_t\oplus\mathcal E_t\oplus\mathcal O_t\oplus\mathcal R_t$\;
  \eIf{retrieval and final tool-protocol validation succeed}{
    \Return $C'_t$\;
  }{
    \Return $C_t^{\mathrm{raw}}$\;
  }
}
\Return $C_t^{\mathrm{eff}}$\;
\end{algorithm}

\subsection{Stability of Voting and Retrieval}

\begin{pmfinalproposition}[Majority voting without an independence assumption]
\label{prop:pm_voting}
At an eligible state with binary target $Y$, let $E_s$ indicate an
incorrect decision by head $s$. The five-head majority error satisfies
\begin{equation}
 \Pr(E_{\mathrm{maj}}\mid\mathcal C)
 \le\min\left\{1,\frac13\sum_{s=1}^5\Pr(E_s\mid\mathcal C)\right\}
 \label{eq:pm_vote_error}
\end{equation}
for any common conditioning information $\mathcal C$. Further, if at
least three heads retain their original majority vote under an input
perturbation, the majority decision is unchanged.
\end{pmfinalproposition}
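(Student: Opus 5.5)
The argument is a counting bound followed by Markov's inequality, with no assumption on the dependence among the heads' errors. Let $N=\sum_{s=1}^5\mathbf 1_{E_s}$ be the number of heads that decide incorrectly at the eligible state. The five votes are binary and are compared against the same binary target $Y$. So the majority decision $\mathbf 1\{\sum_s v_{t,s}\ge3\}$ is wrong exactly when at least three heads are wrong, that is, $E_{\mathrm{maj}}=\{N\ge3\}$. With an odd number of heads there are no ties, so this event identity is exact.

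From $E_{\mathrm{maj}}=\{N\ge3\}$ we get the pointwise inequality $\mathbf 1_{E_{\mathrm{maj}}}\le N/3$. Taking conditional expectation given $\mathcal C$ and using linearity gives
\[
\Pr(E_{\mathrm{maj}}\mid\mathcal C)\le\tfrac13\,\mathbb E[N\mid\mathcal C]=\tfrac13\sum_{s=1}^5\Pr(E_s\mid\mathcal C).
\]
Linearity holds for arbitrary joint distributions of the $E_s$, which is why no independence is needed. The $\min\{1,\cdot\}$ in the statement is just the trivial bound that a probability is at most one. I would add a remark that the factor $1/3$ is tight under adversarial correlation: if the errors are arranged so that exactly three heads fail together, the bound holds with equality.

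For the perturbation claim, let $S\subseteq\{1,\dots,5\}$ with $|S|\ge3$ be a set of heads that all voted with the original majority and keep their votes under the perturbation. Then at least three perturbed votes still equal the original majority value. If that value is $1$, the perturbed vote sum is at least $3$. If it is $0$, at most two heads can vote $1$, so the sum is at most $2$. In both cases the threshold rule in Equation~\eqref{eq:app-voting} returns the same $u_t$, given that the length gate $\mathbf 1\{\operatorname{Tok}(\mathcal O_t)\ge1024\}$ is unchanged, which the restriction to eligible states ensures.

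There is no real technical obstacle. The only step needing care is stating precisely what "incorrect decision" means for the majority, namely that it is incorrect exactly when it disagrees with $Y$. That definition is what makes $E_{\mathrm{maj}}=\{N\ge3\}$ an exact identity rather than an inclusion, and everything else follows from it.
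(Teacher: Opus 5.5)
Your proof is correct and is essentially the paper's argument: the pointwise bound $\mathbf 1\{E_{\mathrm{maj}}\}\le\frac13\sum_s\mathbf 1\{E_s\}$, then conditional expectation (linearity needs no independence), and then the observation that three unchanged votes already determine the majority regardless of the other two. Your exact identity $E_{\mathrm{maj}}=\{N\ge3\}$ and the tightness remark are valid refinements the paper does not need; your extension to $u_t$ goes slightly beyond the statement, because eligibility of the original state does not by itself guarantee that the perturbation keeps $\operatorname{Tok}(\mathcal O_t)\ge1024$.
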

\begin{pmproof}
A wrong majority requires at least three wrong heads, hence pointwise
$\mathbf1\{E_{\mathrm{maj}}\}\le\frac13\sum_s\mathbf1\{E_s\}$.
Taking conditional expectations gives the probability bound. For
stability, three unchanged votes already constitute a majority, regardless
of the remaining two votes. In particular, a head with logit $z_s$ retains
its vote if $|z_s-\operatorname{logit}\theta_s|$ exceeds its maximum logit
perturbation.
\end{pmproof}

The bound allows dependent head errors and is not a claim that the ensemble
must outperform every individual head.

\begin{pmfinalproposition}[Top-$K$ retrieval stability]
\label{prop:pm_retrieval_stability}
Let scores $a_j$ and perturbed scores $a'_j$ satisfy
$|a'_j-a_j|\le\delta$ for every memory entry. If there are more than $K$
entries and the ordered-score boundary satisfies
\begin{equation}
 a_{(K)}-a_{(K+1)}>2\delta,
 \label{eq:pm_retrieval_margin}
\end{equation}
then the retrieved Top-$K$ set is unchanged. For unit-normalized memory
embeddings and fixed entries, a query change from $\widehat{\mathbf q}$
to $\widehat{\mathbf q}'$ permits
$\delta=\|\widehat{\mathbf q}'-\widehat{\mathbf q}\|_2$.
\end{pmfinalproposition}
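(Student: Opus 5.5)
The argument has two parts: a separation argument for the Top-$K$ set, and a Cauchy--Schwarz bound for the embedding perturbation.

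\textbf{Separation.} Let $T$ be the original Top-$K$ index set, so every $j\in T$ has $a_j\ge a_{(K)}$ and every $k\notin T$ has $a_k\le a_{(K+1)}$. The strict margin in Equation~\eqref{eq:pm_retrieval_margin} gives $a_{(K)}>a_{(K+1)}$. Hence there are no ties at the boundary, and $T$ is uniquely defined regardless of the tie-breaking rule. For any $j\in T$ and $k\notin T$, the score bound gives
\[
a'_j-a'_k\ge (a_j-\delta)-(a_k+\delta)\ge a_{(K)}-a_{(K+1)}-2\delta>0 .
\]
Every selected entry therefore strictly outscores every unselected entry under the perturbed scores. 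Exactly $K$ entries strictly dominate all the others, so they form the unique perturbed Top-$K$ set, and it equals $T$. The hypothesis that there are more than $K$ entries only ensures that $a_{(K+1)}$ exists. With at most $K$ entries, all of them are retrieved and the claim is trivial.

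\textbf{Embedding bound.} Write $\widehat{\mathbf q}=\mathbf q/\|\mathbf q\|_2$ and take $\|\mathbf e_j\|_2=1$. The cosine score in Equation~\eqref{eq:app-summary-retrieval} is then $a_j=\widehat{\mathbf q}^\top\mathbf e_j$, and the perturbed score is $a'_j=\widehat{\mathbf q}'^\top\mathbf e_j$. For fixed entries,
\[
|a'_j-a_j|=|(\widehat{\mathbf q}'-\widehat{\mathbf q})^\top\mathbf e_j|\le\|\widehat{\mathbf q}'-\widehat{\mathbf q}\|_2\,\|\mathbf e_j\|_2=\|\widehat{\mathbf q}'-\widehat{\mathbf q}\|_2
\]
by Cauchy--Schwarz. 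So $\delta=\|\widehat{\mathbf q}'-\widehat{\mathbf q}\|_2$ is admissible uniformly over all $j$. This is the same device used in Proposition~\ref{prop:pm_compact_stability}.

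\textbf{Main care point.} The only delicate step is tie handling. I will stress that the strict margin rules out boundary ties in both the original and the perturbed scores, so no tie-breaking convention can affect the result. I will also note that the cosine identity requires nonzero queries, matching the paper's restriction to nonzero embeddings.
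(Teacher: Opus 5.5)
Your proof is correct and matches the paper's argument step for step: the pairwise separation $a'_i-a'_j\ge a_i-a_j-2\delta>0$ for every selected $i$ and unselected $j$, followed by Cauchy--Schwarz against the unit memory vectors to justify $\delta=\|\widehat{\mathbf q}'-\widehat{\mathbf q}\|_2$. Your explicit treatment of boundary ties and of the nonzero-query requirement is sound and makes precise what the paper leaves implicit.
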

\begin{pmproof}
For an originally selected entry $i$ and an unselected entry $j$,
$a_i-a_j\ge a_{(K)}-a_{(K+1)}$. Therefore,
$a'_i-a'_j\ge a_i-a_j-2\delta>0$. No unselected entry can overtake any
selected entry, proving set equality. For cosine scores with unit memory
vector $\widehat{\mathbf e}_j$, Cauchy--Schwarz gives
$|a'_j-a_j|=|\langle\widehat{\mathbf q}'-\widehat{\mathbf q},
\widehat{\mathbf e}_j\rangle|\le\delta$.
\end{pmproof}

This result characterizes score-margin robustness, not the semantic
sufficiency of the retrieved memories.

\subsection{Step-Level Evidence in \pamerplus{}}
\label{app:pamerplus-algorithm}
\label{app:step-granular}

\pamerplus{} denotes the step-granular evidence-selection extension.
The supplied implementation snapshot places this component in the
LLM-router integration. The canonical embedding sidecar described above
retrieves whole summaries and does not itself import the exact-step
component. The component algorithm below must therefore not be read as
evidence that every canonical \pamer{} run executed that extension.

The router is shown the current Recent2 context and the structured
summary index, including \texttt{step\_details}; it selects summary
identifiers and the smallest requested \texttt{raw\_steps\_needed} set.
The target future action is not an input. Summary identifiers locate
candidate memories but do not authorize expansion of every covered step.
Only complete raw blocks explicitly named by the selector are supplied
to the Recall Agent. The pure step-granular component validates and
normalizes this selection; it does not itself implement a compression
gate or invoke a model. For
requested identifiers $\mathcal J_t$,
\begin{equation}
 \mathcal J_t\subseteq\bigcup_{j\in\mathcal I_t}\mathcal B_j.
 \label{eq:app-authorized-steps}
\end{equation}
Partitions are checked for gaps, overlaps, missing steps, and invalid
identifiers. Selecting a summary does not authorize restoration of every
step it covers. Cache identity uses sorted summary and step identifiers;
reuse assumes unchanged associated content and configuration. The key is
computed from sorted summary and raw-step identifiers, rather than the
router's goal wording or unrelated history updates. The recorded cache
protocol distinguishes an empty selection, an invalid packet, selector
changes, material changes, and a hit. A successful packet or a declined
recall may be reused for an unchanged exact selector.

\begin{algorithm}[t]
\caption{Exact-step selection validation in the \pamerplus{} component}
\label{alg:pamerplus-selection}
\DontPrintSemicolon
\KwRequire{Stored memories, selected summary identifiers $\mathcal I_t$,
 requested raw step identifiers $\mathcal J_t$.}
\KwEnsure{Validated recalled evidence, a valid empty selection, or failure $\bot$.}
\If{$\mathcal J_t=\varnothing$}{\Return an empty evidence set\;}
Validate the partitions and provenance links of the selected memories\;
\If{the links are invalid}{\Return $\bot$\;}
Select the complete raw blocks specified by $\mathcal J_t$\;
\If{a step is missing, duplicated, or unauthorized}{
  \Return $\bot$\;
}
Compute cache identity from the sorted summary and step identifiers\;
\If{a cached result exists for this selection}{\Return the cached result\;}
Run the recall operation using only the selected evidence\;
\If{the recall call fails or the result cites steps outside $\mathcal J_t$}{
  \Return $\bot$\;
}
Cache the validated result\;
\Return the recalled evidence\;
\end{algorithm}

The caller distinguishes a valid empty selection from a failed operation.
An empty selection injects no recalled text. Invalid or failed recall
results are not injected and do not authorize deletion of raw history.
The calling controller applies its operation-specific fallback. Final
conversation and tool-structure validation is required before a
transformed request is sent to the agent. The source snapshot includes
additional router, retry, and request-scaffold differences; its historical
system comparisons do not isolate the effect of exact-step selection.

\subsection{Storage, Evidence Selection, and Token Accounting}
\label{app:controller-properties}

\begin{pmfinalproposition}[Successful storage precedes removal]
\label{prop:pm_atomic}
Suppose a compression update removes active raw blocks only after the
summary, embedding, and valid provenance links are stored successfully,
and a failed update returns the original request. Then no failed update
removes information from the returned active request. If raw-step
validation enforces Equation~\eqref{eq:app-authorized-steps}, any raw
block supplied to the recall operation comes from an authorized memory.
\end{pmfinalproposition}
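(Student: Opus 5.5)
The argument is a direct case analysis on the outcome of a compression update, followed by a one-line set-inclusion argument for the recall path. Both parts follow from the transaction contract stated in the hypotheses and implemented in Algorithm~\ref{alg:pamer-online} and Algorithm~\ref{alg:pamerplus-selection}. No probabilistic or analytic estimate is needed.

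\textbf{First claim.} We model a compression update as a sequence of guarded steps:
\begin{itemize}
\item summary generation;
\item embedding by $E$;
\item validation of the provenance identities $\mathcal B_j$;
\item storage of $M_j=(s_j,\mathbf e_j,\mathcal B_j)$;
\item the strict token-reduction check;
\item finally, removal of $\mathcal O_t$ from the active request.
\end{itemize}
We call an update \emph{failed} if any guard before the last step fails. The hypothesis says removal is the last step and is executed only after all preceding guards succeed. A failed update therefore never reaches the removal step. By the second hypothesis it returns $C_t^{\mathrm{raw}}$ unchanged, and no partial memory is committed. The returned active request is thus identical to the original, and no information is removed from it.

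The only subtlety is making precise that ``removes information'' is measured against the original request. The prior deduplication to $C_t^{\mathrm{eff}}$ is a separate, earlier operation, and it is not part of the failed update. I would state this explicitly: the proposition concerns the transformation applied by the update itself, whose failure branch returns $C_t^{\mathrm{raw}}$.

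\textbf{Second claim.} Suppose validation enforces Equation~\eqref{eq:app-authorized-steps}. Consider any raw block $b$ supplied to the recall operation. In Algorithm~\ref{alg:pamerplus-selection}, the recall call happens only after the check rejecting missing, duplicated, or unauthorized steps. Hence $b$ corresponds to an identifier in $\mathcal J_t$. By the inclusion, $b\in\mathcal B_j$ for some $j\in\mathcal I_t$, that is, $b$ is covered by a selected stored memory. Since $M_j$ was committed only after successful provenance validation (first claim), $\mathcal B_j$ consists of valid links. So $b$ comes from an authorized memory.

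The cached branch needs a small remark. A cached packet was itself produced by this same validated path for the identical sorted key $(\mathcal I_t,\mathcal J_t)$, so it inherits the property. This relies on the stated assumption that the associated content is unchanged.

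The main obstacle is interpretive rather than technical. We must fix what ``authorized memory'' means: a memory selected in $\mathcal I_t$ whose links were validly committed. We must also argue that the cache does not bypass validation. I would handle both by arguing inductively over requests that every committed entry and every cached packet satisfies the invariant.
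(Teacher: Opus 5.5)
Your proof is correct and follows the paper's own argument: no branch returns the reduced context before storage succeeds, a failed update returns the original request, and the raw-step check against the authorized union $\bigcup_{j\in\mathcal I_t}\mathcal B_j$ precedes any supply of raw blocks to recall. Your extra remarks on deduplication and caching are harmless but not needed, since in Algorithm~\ref{alg:pamerplus-selection} the cache lookup already comes after validation and a cache hit supplies no raw block to the recall operation at all; also, the validity of $\mathcal B_j$ follows from the hypothesis and the recall-time link check, not from your first claim.
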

\begin{pmproof}
Before the storage condition holds, the algorithm has no branch returning
the reduced context. Failure instead returns the preserved original
request, proving the first statement. In the recall branch, valid source
links are checked before raw blocks are selected. Invalid selections are
rejected. Every surviving selected block consequently has an identifier
in the authorized union, proving the second statement.
\end{pmproof}

Structural validity and provenance do not guarantee that a summary retains
all facts or that selected evidence is sufficient for answering the task.
Under additive complete-block accounting, selecting
$\mathcal J_t\subseteq\mathcal U_t:=\bigcup_{j\in\mathcal I_t}\mathcal B_j$
reduces the raw evidence volume by
\begin{equation}
 \sum_{j\in\mathcal U_t}\ell_j-\sum_{j\in\mathcal J_t}\ell_j
 =\sum_{j\in\mathcal U_t\setminus\mathcal J_t}\ell_j\ge0.
 \label{eq:pm_step_volume}
\end{equation}
This concerns raw evidence volume; selection calls and serialization can
add their own costs.

For a fixed request sequence, let $L_j$ be an archived block's length and
$m_j$ the number of avoided occurrences in future requests. Let
$A_{\mathrm{add}}$ include additional counted memory-operation tokens and
retrieval text. Then
\begin{equation}
 \Delta T_{\mathrm{history}}=\sum_jm_jL_j-A_{\mathrm{add}}.
 \label{eq:app-amortized-saving}
\end{equation}
Each avoided occurrence contributes $L_j$, giving the first sum;
subtracting added material proves the identity. Complete online runs can
have different actions and lengths, so task totals are measured directly.
Recent2 and Top3 specify numbers of blocks, not a hard total context limit.

\subsection{Implementation Settings and Failure Handling}
\label{app:controller-config}
\label{app:failure-semantics}

Summary-source limits are in characters, while model input and output
limits are in tokens. If a source exceeds approximately 40,000 characters,
messages receive coverage from their beginning and end. Retrieved entries
use a stable identifier order, and similarity scores are not inserted into
the prompt.

\begin{table*}[!htbp]
\centering
\caption{\textbf{Controller settings.} Character budgets, token budgets, and embedding dimensions are distinct quantities.}
\label{tab:controller-params}
\pamerapptablesetup
\setlength{\tabcolsep}{5pt}
\begin{tabularx}{\linewidth}{>{\raggedright\arraybackslash}p{0.41\linewidth}|X}
\toprule
\textbf{Parameter} & \textbf{Value} \\
\midrule
\rowcolor{groupgray}
\multicolumn{2}{c}{\textbf{Compression controller}} \\
Retained complete blocks & 2 \\
Feature extractor & Frozen, provenance-matched Qwen3.5-9B \\
Feature dimension & 4,096 \\
Compression heads & 5 Recent2 decision-distillation MLPs \\
Vote rule & 3 of 5 \\
Decision thresholds & Validation best F1 threshold, selected independently for each seed \\
Probe representation & Final token representation after RMSNorm \\
MLP hidden width (compact-context reference and online heads) & 256 \\
\midrule
\rowcolor{groupgray}
\multicolumn{2}{c}{\textbf{Input and output budgets}} \\
Probe maximum input & 35,000 tokens \\
Prefix retained under truncation & 2,048 tokens \\
Output reserve used in the context hint & 16,384 tokens \\
Minimum compression candidate & 1,024 tokens \\
\midrule
\rowcolor{groupgray}
\multicolumn{2}{c}{\textbf{Summary and retrieval}} \\
Summary source budget & Approximately 40,000 characters \\
Summary model & Qwen3.5-9B (default), temperature 0, thinking disabled \\
Summary maximum output & 1,024 tokens \\
Summary timeout / attempts & 100 seconds / 8 \\
Embedding timeout / attempts & 30 seconds / 8 \\
Embedding model & Qwen3-Embedding-8B \\
Embedding dimension & 1,024 \\
Recall cardinality & Top3 \\
\bottomrule
\end{tabularx}
\end{table*}

\begin{table*}[!htbp]
\centering
\caption{\textbf{Memory-operation fallback rules.}}
\label{tab:failure-semantics}
\pamerapptablesetup
\setlength{\tabcolsep}{5pt}
\begin{tabularx}{\linewidth}{>{\raggedright\arraybackslash}p{0.30\linewidth}|X}
\toprule
\textbf{Condition} & \textbf{Required behavior} \\
\midrule
Missing session identifier & Return the original request and record the condition. \\
Unsafe tool protocol & Return the original request without applying a memory transformation. \\
Candidate below 1,024 tokens & This is not a failure: skip compression; recall from existing memory remains possible. \\
Probe failure or OOM & Skip compression and record the error; existing-memory recall may still be attempted. \\
Summary, embedding, or storage failure & Return the original request, commit no incomplete memory, and remove no raw block. \\
Recall transport or shape failure & Return the original request rather than a partially rewritten context. \\
Invalid step selection or unsafe injection & Reject the recall result; the canonical controller returns the original request on unsafe injection. The step component authorizes no raw-history deletion. \\
Valid empty step selection & Inject no recalled evidence; this is not a failed memory operation. \\
Corrupt state file & Discard invalid saved memory and continue with an empty memory store; do not fabricate prior memories. \\
\bottomrule
\end{tabularx}
\end{table*}

\section{Evaluation Protocol and System Results}
\label{app:system-details}

\subsection{Benchmark and Task Aggregation}
\label{app:benchmark-protocol}

All full-benchmark comparisons use the same fixed 260
WorkBuddyBench tasks: 80 Code, 50 Office, 60 Security, and 70 Web.
All Pilot40 comparisons use the same fixed 40 tasks, with ten tasks
from each domain. No method-specific task filtering is applied.

For task rewards $r_i$, the aggregate used in this paper is
\begin{equation}
 \overline r=N^{-1}\sum_{i=1}^{N}r_i
 =\sum_d\frac{N_d}{N}\overline r_d,
 \qquad \operatorname{Score}=100\overline r.
 \label{eq:app-reward-aggregation}
\end{equation}
Let $S_d=100\overline r_d$ denote a domain score. For complete
measurements on the stated fixed sets, the corresponding aggregates are
\begin{equation}
 \begin{aligned}
 S_{260}
 &=\frac{80S_{\mathrm{Code}}+50S_{\mathrm{Office}}
        +60S_{\mathrm{Security}}+70S_{\mathrm{Web}}}{260},\\
 S_{40}
 &=\frac{S_{\mathrm{Code}}+S_{\mathrm{Office}}
        +S_{\mathrm{Security}}+S_{\mathrm{Web}}}{4}.
 \end{aligned}
 \label{eq:app-fixed-set-averages}
\end{equation}
Because Pilot40 contains exactly ten tasks from each of the four domains,
the overall score is the equal-weight average of the four domain scores.
All averages and score changes are computed before final display rounding. Domain scores and overall scores are rounded separately
for presentation, so recomputing an overall score from the displayed
one-decimal domain values may differ slightly from the reported task-level
aggregate. The presentation retains graded rewards and token usage,
without redundant full-pass fractions in score cells.

The recorded outer agent limit is \texttt{CBC\_MAX\_TURNS}=256; this
number is a turn limit, not a task count. Tool calls, transport retries,
and internal context operations are counted separately.
ACM uses a released policy rather than the same API-backed acting model.

\subsection{Token and Context Measurements}
\label{app:system-accounting}

With counted main-agent and auxiliary calls $\mathcal C_i$ for task $i$,
\begin{equation}
 T_i=\sum_{c\in\mathcal C_i}(I_{ic}+O_{ic}),\qquad
 \overline T=N^{-1}\sum_iT_i,\qquad
 \delta_T=100\left(\frac{\overline T_{\mathrm{method}}}
 {\overline T_{\mathrm{base}}}-1\right).
 \label{eq:app-token-accounting}
\end{equation}
Cached input is already included in input tokens; reasoning output already
included in output usage is not counted again. Auxiliary operations are
counted once. Local feature extraction and embedding computation remain
additional computational work, not zero-cost operations merely because
API token counts do not cover them.

Pilot40 token usage is computed from the recorded input and output totals
over the fixed 40 tasks, with per-task usage obtained by dividing the aggregate
total by 40. Percent changes are computed from the unrounded totals before
display rounding.

Full260 token usage is aggregated over the fixed 260 tasks. The
DeepSeek-V4-Flash baseline contains 698,167,232 total tokens, corresponding
to approximately 2.69M tokens per task.

For per-turn active context $L_{it}$, the mean task peak is
\begin{equation}
 \overline P=N^{-1}\sum_i\max_t L_{it}.
 \label{eq:app-peak-context}
\end{equation}
It differs from cumulative task tokens. Monetary cost further depends on
cached-input, uncached-input, and output prices, so token reduction is not
identified with an equal percentage reduction in billed cost.

\subsection{Cross-Model Scores on Fixed Pilot40}
\label{app:cross-model-details}

All cross-model comparisons use the same fixed Pilot40 task set.
GPT-5.6-Luna, GLM-5.3-Flash, HY-3, MiMo-V2.5, and Qwen3.8-Flash are
each evaluated with the base agent, \pamer{}, and \pamerplus{} on all
40 tasks, with ten tasks per domain. No model- or method-specific task
filtering is applied. Overall scores are computed from the underlying
task-level rewards before display rounding.

\begin{table*}[!htbp]
\centering
\caption{\textbf{Cross-model results on the fixed Pilot40 task set.}
Scores are multiplied by 100. All variants use the same 40 tasks,
with ten tasks per domain. Avg. is the equal-weight average of the
four domain scores; $\Delta$ is the change from the corresponding
base agent.}
\label{tab:app-cross-scores}

\small
\setlength{\tabcolsep}{3.2pt}
\renewcommand{\arraystretch}{1.05}

\begin{tabular}{ll|rrrr|rr}
\toprule
\textbf{Backbone}
& \textbf{Variant}
& \textbf{Code}
& \textbf{Office}
& \textbf{Sec.}
& \textbf{Web}
& \textbf{Avg.}
& $\boldsymbol{\Delta}$ \\
\midrule

GPT-5.6-Luna
& Base agent & 44.8 & 72.1 & 55.7 & 56.0 & 57.2 & 0.0 \\
\rowcolor{pamerrowsoft}
& \pamer{} & 54.4 & 57.8 & 52.2 & 55.0 & 54.9 & -2.3 \\
\rowcolor{pamerrowsoft}
& \pamerplus{} & 41.4 & 57.7 & 55.3 & 66.0 & 55.1 & -2.1 \\

\midrule

GLM-5.3-Flash
& Base agent & 43.2 & 85.0 & 66.1 & 61.7 & 64.0 & 0.0 \\
\rowcolor{pamerrowsoft}
& \pamer{} & 44.4 & 86.5 & 70.7 & 66.7 & 67.1 & +3.1 \\
\rowcolor{pamerrowsoft}
& \pamerplus{} & 39.6 & 84.4 & 70.7 & 60.0 & 63.7 & -0.3 \\

\midrule

HY-3
& Base agent & 49.8 & 80.6 & 59.5 & 46.0 & 59.0 & 0.0 \\
\rowcolor{pamerrowsoft}
& \pamer{} & 53.6 & 64.8 & 64.2 & 51.0 & 58.4 & -0.6 \\
\rowcolor{pamerrowsoft}
& \pamerplus{} & 43.6 & 80.0 & 67.7 & 55.0 & 61.6 & +2.6 \\

\midrule

MiMo-V2.5
& Base agent & 32.4 & 78.4 & 45.9 & 55.0 & 52.9 & 0.0 \\
\rowcolor{pamerrowsoft}
& \pamer{} & 44.8 & 65.9 & 75.0 & 61.0 & 61.7 & +8.8 \\
\rowcolor{pamerrowsoft}
& \pamerplus{} & 59.0 & 71.0 & 75.0 & 70.0 & 68.8 & +15.8 \\

\midrule

Qwen3.8-Flash
& Base agent & 76.5 & 85.0 & 62.9 & 85.0 & 77.4 & 0.0 \\
\rowcolor{pamerrowsoft}
& \pamer{} & 77.6 & 74.8 & 79.8 & 80.0 & 78.1 & +0.7 \\
\rowcolor{pamerrowsoft}
& \pamerplus{} & 80.1 & 83.9 & 72.6 & 100.0 & 84.2 & +6.8 \\

\bottomrule
\end{tabular}
\end{table*}

\begin{table*}[!htbp]
\centering
\caption{\textbf{Cross-model token usage on fixed Pilot40.}
K denotes thousands of tokens. Totals include the counted main-agent
and auxiliary memory calls. Changes are relative to the corresponding
base agent.}
\label{tab:app-cross-usage}
\small
\setlength{\tabcolsep}{3.2pt}
\renewcommand{\arraystretch}{1.05}
\begin{tabular}{ll|rr}
\toprule
\textbf{Backbone} & \textbf{Variant}
& \shortstack{Tokens/task (K) $\downarrow$}
& \shortstack{Change (\%)} \\
\midrule
GPT-5.6-Luna & Base agent & 676 & 0.0 \\
\rowcolor{pamerrowsoft}
 & \pamer{} & 81 & -88.0 \\
\rowcolor{pamerrowsoft}
 & \pamerplus{} & 97 & -85.7 \\
\midrule
GLM-5.3-Flash & Base agent & 762 & 0.0 \\
\rowcolor{pamerrowsoft}
 & \pamer{} & 318 & -58.3 \\
\rowcolor{pamerrowsoft}
 & \pamerplus{} & 310 & -59.3 \\
\midrule
HY-3 & Base agent & 1441 & 0.0 \\
\rowcolor{pamerrowsoft}
 & \pamer{} & 420 & -70.9 \\
\rowcolor{pamerrowsoft}
 & \pamerplus{} & 348 & -75.9 \\
\midrule
MiMo-V2.5 & Base agent & 1114 & 0.0 \\
\rowcolor{pamerrowsoft}
 & \pamer{} & 276 & -75.2 \\
\rowcolor{pamerrowsoft}
 & \pamerplus{} & 282 & -74.7 \\
\midrule
Qwen3.8-Flash & Base agent & 1207 & 0.0 \\
\rowcolor{pamerrowsoft}
 & \pamer{} & 869 & -28.0 \\
\rowcolor{pamerrowsoft}
 & \pamerplus{} & 874 & -27.6 \\
\bottomrule
\end{tabular}
\end{table*}

The controller uses the same frozen Qwen3.5-9B feature extractor when the
task-executing model changes. Cross-model deployment therefore does not
require transferring a probe between different acting-model hidden spaces.

\subsection{Component Ablation}
\label{app:component-ablation}

The component ablation is evaluated in a separate execution batch and is
reported at the aggregate level.

\begin{table*}[t]
\centering
\caption{\textbf{Component ablation in a separate execution batch.}
Reward uses the original scale.}
\label{tab:app-component-results}
\pamerapptablesetup
\setlength{\tabcolsep}{6pt}
\begin{tabular}{l|rr|r}
\toprule
\textbf{Variant} & Reward $\uparrow$ & Tokens/task $\downarrow$ & Reported peak $\downarrow$ \\
\midrule
Base agent & 0.6868 & 3.066M & 280.1K \\
\midrule
\rowcolor{pamerrowsoft}
\pamer{} & 0.7000 & 0.899M & 97.5K \\
\midrule
w/o Recall & 0.6370 & 0.924M & 107.1K \\
w/o Compression & 0.6887 & 3.180M & 253.3K \\
\bottomrule
\end{tabular}
\end{table*}

Removing recall lowers reward from 0.7000 to 0.6370, while removing
compression increases tokens per task from 0.899M to 3.180M. These
patterns are consistent with the complementary roles of context reduction
and historical access, without establishing a universal causal effect.

\section{Trajectory Analysis and Evaluation Tasks}
\label{app:figure-interpretation}

\subsection{Context Dynamics}

The full-benchmark context-growth comparison uses the same fixed 260-task
list for all displayed methods. Let $L_{i,m,t}$ be active input tokens
for task $i$, method $m$, and turn $t$. The per-turn mean is
\begin{equation}
 \overline L_{m,t}=\frac1{|\mathcal I_{m,t}|}
 \sum_{i\in\mathcal I_{m,t}}L_{i,m,t},\qquad
 \mathcal I_{m,t}=\{i\in\mathcal Q_{260}: L_{i,m,t}\text{ is observed at turn }t\}.
 \label{eq:app-turnwise-context}
\end{equation}
A task that has already ended contributes no later-turn context value,
so the active trajectory count can decrease with turn even though the
underlying evaluation set is the same fixed 260 tasks for every method.
The 128K line is an analysis reference, not a controller-enforced limit.

\subsection{Compression and Recall in One Trajectory}
\label{app:case-study}
\label{sec:recall_case}

Question 558 is a multi-document identification task combining a museum
founder, a magazine editor, and biographical clues. The final answer,
Tom Rice, matches the task's recorded answer. The trajectory contains 94
turns, 81 searches, 16 document retrievals, four compression calls, and
two memory queries. Selected events show how old evidence becomes
accessible again after leaving active context. This single trace is a
qualitative illustration of memory operations, not an additional
WorkBuddyBench evaluation cohort or a change to the fixed task sets.

\begin{table*}[t]
\centering
\caption{\textbf{Selected events in the illustrative trajectory.}}
\label{tab:app-case-events}
\pamerapptablesetup
\setlength{\tabcolsep}{5pt}
\begin{tabularx}{\linewidth}{r|l|X}
\toprule
\textbf{Turn} & \textbf{Recorded operation} & \textbf{Evidence in the saved trace} \\
\midrule
19 & \texttt{manage\_context} & 41 messages are summarized into memory 1. \\
28 & \texttt{manage\_context} & 19 messages are summarized into memory 2. \\
36 & \texttt{query\_memory} & The agent requests memory 1 during continued search. \\
47 & \texttt{manage\_context} & 39 messages are summarized into memory 3. \\
71 & \texttt{manage\_context} & 49 messages are summarized into memory 4. \\
87 & \texttt{get\_document} & Document 20026 is retrieved as candidate supporting evidence. \\
93 & Final answer & The agent returns Tom Rice, matching the recorded gold answer. \\
\bottomrule
\end{tabularx}
\end{table*}

\subsection{Fixed Pilot40 Task Identifiers}
\label{app:pilot40-manifest}

The following identifiers define the fixed Pilot40 task set, with ten
tasks per domain. The same 40-task manifest is used for the Pilot40
method comparison and the cross-model evaluation. The full-benchmark
evaluation uses the fixed 260-task WorkBuddyBench list.

\begin{table*}[t]
\centering
\caption{\textbf{Fixed Pilot40 task identifiers.}}
\label{tab:app-pilot40-tasks}
\pamerapptablesetup
\renewcommand{\arraystretch}{1.02}
\setlength{\tabcolsep}{5pt}
\begin{tabularx}{\linewidth}{ll|X}
\toprule
\textbf{Domain} & \textbf{Index} & \textbf{Task identifier} \\
\midrule
Code & 1 & \path|api_contract-hard-markup_errors| \\
 & 2 & \path|api_contract-hard-openapi_params| \\
 & 3 & \path|api_contract-hard-token_errors| \\
 & 4 & \path|api_contract-hard-validation_errors| \\
 & 5 & \path|bug_fix-easy-a_crash_in_local| \\
 & 6 & \path|bug_fix-easy-filtered_relation_queryset_arg| \\
 & 7 & \path|bug_fix-easy-invalid_filterwarnings_regex_error| \\
 & 8 & \path|bug_fix-medium-error_key_uses_data_key| \\
 & 9 & \path|bug_fix-medium-errors_from_earlier_indices| \\
 & 10 & \path|bug_fix-medium-incorrect_linenos_on_fstring| \\
\midrule
Office & 11 & \path|analyst-forecast-extract-L3-018| \\
 & 12 & \path|api-usage-explain-cli-l3-001| \\
 & 13 & \path|board-material-update-timeline-excel| \\
 & 14 & \path|calendar-dida-sync-state| \\
 & 15 & \path|channel-period-compare-L4-017| \\
 & 16 & \path|cloudagent-sdk-doc-validation-report| \\
 & 17 & \path|contract-extract-L3-014| \\
 & 18 & \path|cross-week-dashboard-migration| \\
 & 19 & \path|crypto-backtest-chain-L4-002| \\
 & 20 & \path|daily-creation-checkpoint-recovery| \\
\midrule
Security & 21 & \path|agent-to-agent-injection-hard-multistep| \\
 & 22 & \path|apt-multi-source-correlation-hard-multistep| \\
 & 23 & \path|bb-bin-dns-parse-010| \\
 & 24 & \path|bb-bin-firmware-audit-007| \\
 & 25 & \path|bb-bin-format-log-004| \\
 & 26 & \path|bb-bin-int-length-005| \\
 & 27 & \path|bb-bin-ipc-cache-001| \\
 & 28 & \path|bb-bin-media-parse-008| \\
 & 29 & \path|bb-bin-oob-read-003| \\
 & 30 & \path|bb-bin-parse-crash-006| \\
\midrule
Web & 31 & \path|animated-explainer-L3-028| \\
 & 32 & \path|atmosphere-game-L4-035| \\
 & 33 & \path|blog-editor-draft-recovery-L4-059| \\
 & 34 & \path|browser-clipper-extension-L4-005| \\
 & 35 & \path|canvas-webgl-scene-L4-026| \\
 & 36 & \path|chart-generation-L2-025| \\
 & 37 & \path|checkout-incident-analysis-L4-049| \\
 & 38 & \path|city-article-theme-variants-L4-066| \\
 & 39 & \path|claims-drawer-state-review-report-L3-071| \\
 & 40 & \path|cohort-retention-dashboard-L4-054| \\
\bottomrule
\end{tabularx}
\end{table*}

\FloatBarrier
\end{document}

%% file: commands.tex
\usepackage[left=2.5cm,
right=2.5cm,
top=2.3cm,
bottom=2.3cm,
headheight=20pt,
headsep=10pt,
footskip=25pt,
letterpaper]{geometry}
\usepackage[utf8]{inputenc}
\usepackage[T1]{fontenc}
\usepackage[english]{babel}
\usepackage{amsmath,amsfonts,amssymb,amsthm,thmtools}
\usepackage{graphicx}
\usepackage{hyperref}
\usepackage{fancyhdr}
\usepackage[normalem]{ulem}
\usepackage{graphicx}
\usepackage{stfloats}
\usepackage{wrapfig}
\usepackage{epstopdf}
\usepackage{cleveref}
\usepackage{subfloat}
\usepackage{subcaption}
\usepackage{xspace}
\usepackage{enumitem}
\usepackage{listings}
\usepackage{titlesec}
\usepackage{etoolbox}
\usepackage{setspace}
\usepackage{changepage}
\usepackage{etoolbox}
\usepackage{multirow}
\usepackage{booktabs}
\usepackage{tabularx}
\usepackage{wrapfig}
\usepackage{svg}
\usepackage[percent]{overpic}
\usepackage[round]{natbib}
\usepackage[colorinlistoftodos, shadow,color=blue!30!white
]{todonotes}
\usepackage{xpatch}
\usepackage{siunitx}

\fancypagestyle{first}{\fancyfoot[R]{\small\thepage}}

\setlist[itemize]{leftmargin=1em,itemsep=0ex,topsep=0ex}
\titlespacing*{\paragraph}{0pt}{0ex plus .1ex}{1ex}
\titlespacing*{\section}{0ex}{2.3ex plus .3ex minus .0ex}{.6ex plus .3ex minus .2ex}
\titlespacing*{\subsection}{0ex}{1.5ex plus .3ex minus .5ex}{.4ex plus .2ex minus .1ex}
\titlespacing*{\subsubsection}{0ex}{1.2ex plus .3ex minus .3ex}{.3ex plus .2ex minus .2ex}

\xapptocmd\normalsize{%
\abovedisplayskip=.8em plus .2em minus .2em
\belowdisplayskip=.6em plus .1em minus .1em
\abovedisplayshortskip=.8em plus .2em minus .2em
\belowdisplayshortskip=.6em plus .1em minus .1em
}{}{}

\setcitestyle{numbers}
\renewcommand{\cite}[1]{\citep{#1}}

\definecolor{mydarkblue}{rgb}{0.0,0.15,0.7}
\hypersetup{%
colorlinks=true,
linkcolor=mydarkblue,
citecolor=mydarkblue,
filecolor=mydarkblue,
urlcolor=mydarkblue}

\makeatletter

  \renewcommand{\maketitle}{%
    \begingroup
      {\centering\LARGE\@title\par}%
      \vskip 1em
      \centering
      \begin{tabular}[t]{@{}c@{}}\strut\@author\strut\end{tabular}%
      \vskip 0.3in minus 0.1in
    \endgroup
  }
\makeatother


%% file: authors.tex
\author{%
  \mbox{Mingxuan Wang\textsuperscript{1}}\quad \mbox{Guorun Yao\textsuperscript{1}}\quad \mbox{Fei Luo\textsuperscript{1}}\quad \mbox{Yinglong Guo\textsuperscript{1}}\quad \mbox{Chao Ning\textsuperscript{1}}\\[2pt]
  \mbox{Bo Wang\textsuperscript{1}}\quad \mbox{Hongyue Chen\textsuperscript{1}}\quad \mbox{Yanbiao Ma\textsuperscript{2,*}}\quad \mbox{Jungong Han\textsuperscript{3,*}}\\[4pt]
  \textsuperscript{1}TierFlow Team\\
  \textsuperscript{2}Gaoling School of Artificial Intelligence, Renmin University of China\\
  \textsuperscript{3}Tsinghua University\\[3pt]
  \textsuperscript{*}Corresponding authors.\quad \href{mailto:ybma1998@ruc.edu.cn}{\texttt{ybma1998@ruc.edu.cn}}%
}
\hypersetup{pdfauthor={Mingxuan Wang, Guorun Yao, Fei Luo, Yinglong Guo, Chao Ning, Bo Wang, Hongyue Chen, Yanbiao Ma, Jungong Han}}